\newlength{\myleftmargin}
\DeclareSymbolFontAlphabet{\Bbb}{AMSb}
\DeclareSymbolFont{wideparensymbol}{OMX}{yhex}{m}{n}
\DeclareMathAccent{\wideparen}{\mathord}{wideparensymbol}{"F3}
\newtheorem{theorem}{Theorem}[section]
\newlength{\fixboxwidth}
\definecolor{darkgreen}{rgb}{0,0.6,0}
\newcommand{\Leq}{\,\leq \,}
\newcommand{\R}{\mathbb{R}}
\newcommand{\RR}{\mathcal{R}}
\newcommand{\D}{\mathrm{D}}
\newcommand{\fcl}{\wideparen{f}}
\newcommand{\cl}[1]{\wideparen{#1}}
\newcommand{\RLP}{\mathcal{R}_{L,P}}
\newcommand{\RB}{\mathcal{R}_{L,P}^*}
\newcommand{\bs}{\boldsymbol}
\newcommand{\TrP}[1]{{\P_X}_{|#1}}
\renewcommand{\a}{\alpha}
\renewcommand{\b}{\beta}
\newcommand{\g}{\gamma}
\newcommand{\e}{\varepsilon}
\newcommand{\n}{\eta}
\newcommand{\vt}{\vartheta}
\renewcommand{\k}{\kappa}
\newcommand{\lb}{\lambda}
\renewcommand{\t}{\tau}
\renewcommand{\P}{\mathrm{P}}
\DeclareMathOperator{\sign}{sign}
\newcommand{\eins}{\boldsymbol{1}}
\let\ifarxiv\iffalse
\let\ifNOTarxiv\iftrue
\begin{document}

%

%

\twocolumn[

\aistatstitle{Spatial Decompositions for Large Scale SVMs}

\ifnum\statePaper=0
\aistatsauthor{ Anonymous Author 1 \And Anonymous Author 2 \And Anonymous Author 3 }

\aistatsaddress{ Unknown Institution 1 \And Unknown Institution 2 \And Unknown Institution 3 }
\else

\aistatsauthor{ Philipp Thomann \And Ingrid Blaschzyk \And Mona Meister \And Ingo Steinwart }

\aistatsaddress{ Univeristy of Stuttgart \And University of Stuttgart \And Robert Bosch GmbH \And University of Stuttgart }
\fi ]

\begin{abstract}
  Although support vector machines (SVMs) are theoretically well understood, 
  their underlying optimization problem becomes very expensive  if, for example,
   hundreds of thousands of samples and a   non-linear kernel are considered.
  Several approaches have been proposed in the past to address this serious limitation.
  In this work we investigate a  decomposition strategy that learns on small, spatially defined  data chunks.  
  Our contributions are two fold: On the theoretical side we 
  establish an oracle inequality for the overall learning method using the hinge loss, and 
  show that the resulting rates match those known for SVMs solving the complete 
  optimization problem with Gaussian kernels. 
  On the practical side we compare our approach to learning SVMs 
  on small, randomly chosen chunks. Here it turns out that for comparable training times
  our approach is significantly faster during testing and also reduces the test error in most cases significantly.
  Furthermore, we show that our  approach easily scales up to
  10 million training samples: including hyper-parameter selection using 
  cross validation, the entire training only takes a few hours on a single machine.
  Finally, we report an experiment on 32 million training samples.
  All experiments used \texttt{liquidSVM} \citep{liquidSVM}.%
%
\end{abstract}

\section{Introduction}

Kernel methods are thoroughly understood from a theoretical perspective, 
used in many settings, and are known to give good performance for small and medium sized  
training sets. Yet they suffer from their computational complexity that grows quadratically in space and at least quadratically in 
time. For example, storing the 
entire kernel matrix to avoid costly recomputations in 
e.g.\ 64GB of memory, one can consider at most 100\;000 data points. In the last 15 years there has been wide research to circumvent this barrier
(cf.~\cite{BoChDCWe07} for an overview):
Sequential Minimal Optimization \cite{Platt99a}
allows for caching kernel rows so that 
the memory barrier is lifted,
parallel solvers
try to leverage from 
recent advances in hardware design,
matrix approximations \cite{WiSe01a}
replace the original $n\times n$ kernel matrix,
where $n$ is the number of samples, by a smaller approximation,
the random Fourier feature method \cite{RaRe08a}
approximates the kernel function by another kernel having an explicit, low-dimensional feature map, which, in a second step, makes it 
possible to solve the primal problem instead of the usually considered dual problem, see 
\cite{SrSz15a} for a recent theoretical investigation establishing optimal rates for this approximation.
Moreover, iterative  strategies \cite{RoVi15a-pub,LiRoZh15a} modify the underlying regularization approach, e.g.\ by controlled  early stopping. Finally,  various decomposition strategies have been proposed, which, in a nutshell, solve many 
small rather than one large optimization problem.

In this work we also  focus on a data decomposition strategy.
Such strategies have been investigated at least since \cite{BoVa92a,VaBo93a}. The most simple such 
strategy, called  \emph{random chunks}, 
splits the data by random into chunks of some given size, train for each chunk,
and finally average  the  decision  functions obtained on every chunk to one final decision function.
Recently, this approach has been investigated theoretically in \cite{ZhDuWa15a}.

Obviously, 
the drawback with this approach is that the test sample has to be evaluated
on every chunk: for example, if every chunk uses  80\% of its training samples as support vectors,
then the test sample  has to be evaluated using  80\% of the entire training set.
From this perspective, a more interesting strategy is to decompose the data set into spatially defined cells,
since in this case every test sample has only to be evaluated using the cell it belongs to. In our example above, this
 amounts to 80\% of the cell size, instead of 80\% of the entire training set size.
Clearly, the difference of both costs can be very significant if, for example the cell size ranges 
in between say a few thousands, but the training set contains millions of examples.
The natural next question is of course, whether and by how much one suffers from this approach in terms
of test accuracy.

Spatial decompositions for SVMs can be obtained in many different ways, e.g., by 
clusters \cite{ChTaJi07a,ChTaJi10a},
decision trees \cite{BeBl98a, WuBeCrST99a, ChGuLiLu10a}, and
$k$-nearest neighbors \cite{ZhBeMaMa06a}. 
Most of these strategies are investigated experimentally,
yet only few theoretical results are known:
\cite{Hable13a} proves universal consistency of localized versions of SVMs.
Oracle inequalities and optimal learning rates have been shown in \cite{MeSt16a}
for least squares SVMs that are trained on disjunct
spatially defined cells.

In the theoretical part of this work we expand the results in \cite{MeSt16a} to the hinge loss.
Besides the obviously  different treatment of the approximation error, the main difference in our work is that 
the least squares loss allows to use an optimal variance bound, whereas for the hinge loss such an 
optimal variance bound is only available for distributions satisfying the best version of Tsybakov's noise condition, see 
\eqref{eq.noise.exponent}.
In general, however, only a weaker variance bound is possible, which in turn makes the technical treatment harder and, surprisingly, 
the conditions on the cell radii that guarantee the best known rates, more restrictive.


In the experimental part we use \texttt{liquidSVM} \citep{liquidSVM}
to train local SVMs on some well-known data sets
to demonstrate that they provide an efficient way to tackle large-scale problems with millions of samples.
As we use full 5-fold cross validation on a $10\times10$ hyper-parameter grid
this shows that SVMs promise to achieve fully automatic machine learning.
It takes only 2-6 hours to train on data sets with 10 millions of samples
and 28 features, using only 10-30GB of memory and a single computer.
Finally, we distributed our software onto 11 machines
to attack a data set with 30 million samples and 631 features.

In Section~\ref{sec.locSVM} we give a description of local SVMs. In Section~\ref{sec.theory} we define them for the hinge loss and Gaussian kernels and state the theoretical results.
To motivate this we give in Section~\ref{sec.toy} a toy example.
In Section~\ref{sec.experiments} we describe the extensive experiments we performed.
The proofs and more details are in the supplement.

\section{The local SVM approach}
\label{sec.locSVM}

In this section we briefly describe the local SVM approach. To this end, let $D:=((x_1,y_1),\ldots,(x_n,y_n))$ be a data set of length $n$, where $x_i \in \R^d$ and $y_i \in \R$. Local SVMs construct a function $f_{D}$ by solving SVMs on many spatially defined small chunks. To be more precise, let  $(A_{j})_{j=1,\ldots,m}$ be an arbitrary partition of the input space $X$. We define for every  $j\in\{1,\ldots,m\}$ the local data set $D_j$ by
\begin{align*}
 D_{j}:=\{(x,y)\in D : x \in A_j\}.
\end{align*}
Then, one learns an individual SVM on \emph{each} cell by solving for a regularization parameter $\lb>0$ the optimization problem
\begin{align*}
 f_{D_{j},\lb}=\underset{f \in H_j}{\arg\min}\, \lb \|f\|_{H_j}^2 + \frac{1}{n}\sum_{x_i,y_i \in D_j}L(y_i,f(x_i)) 
\end{align*}
for every $j\in\{1,\ldots,m\}$, where $H_j$ is a reproducing kernel Hilbert space over $A_j$ with reproducing kernel $k_j: A_j \times A_j \to \R$, see \citet[Chapter~4]{StCh08}, and where $L: Y \times \R \to [0,\infty)$ is a function describing our learning goal. 
The final decision function  $f_{D,\lb} : X \to \R$ is then defined by
\begin{align*} 
 f_{D,\lb}(x) := \sum_{j=1}^{m} \eins_{A_{j}}(x)f_{D_{j},\lb}(x).
\end{align*}
To illustrate the advantages of this approach, 
let us assume that the size of the  data sets $D_j$ is nearly the same for all $j\in\{1,\ldots,m\}$, that means $|D_j|\approx n/m$. For example, if the data is uniformly distributed, it is not hard to show that the latter assumption holds, and if the data lies uniformly on a 
low-dimensional manifold, the same is true, since empty cells can be completely ignored. 
Now, the calculation of the kernel matrices $K_j^{i,l}=k_j(x_i,x_l)$ for $x_i,y_l \in D_j$ scales as
\begin{align*}
 O\left(m \cdot \left(\frac{n}{m}\right)^2 \cdot d\right)=O\left(\frac{n^2 \cdot d}{m} \right).
\end{align*}
In comparison, for a global SVM the calculation of the kernel matrix $K^{i,l}=k(x_i,x_l)$ scales as
\begin{align*}
O\left(n^2 \cdot d\right),
\end{align*}
such that splitting and multi-thread improve that scaling by $1/m$. 
Similarly, it is well-known that  the time complexity of the solver is 
\begin{gather}
  O\left(m \cdot \left(\frac{n}{m}\right)^{1+a}\right)= O\left(n \cdot \left(\frac{n}{m}\right)^a\right), 
  \label{eq:complexity}
\end{gather}
where  $a\in[1,2]$ is a constant,
and the test time scales as the kernel
calculation (see Table~\ref{time-table-O} in the supplement for experimental corroboration and \citet[Theorem~6]{StHuSc11a} for theoretical bounds)%
. 
In all three phases we thus see a clear improvement over a globally trained SVM.
Moreover, while SVMs trained on random chunks have the same complexities for the kernel 
matrix and the solver, they only have the bad complexity of 
the global approach during testing.


\section{Oracle inequality and learning rates for local SVMs with hinge loss}
\label{sec.theory}

The aim of this section is to  theoretically investigate
the local SVM approach for binary classification. To this end, we define for a measurable function $L: Y\times\R\to[0,\infty)$, called loss function, the $L$-risk of a measurable function $f:X\to\R$ by
\begin{align*}
 \RR_{L,\P}(f)=\int_{X\times Y} L(y,f(x)) \,d\P(x,y).
\end{align*}
Moreover, we define the optimal $L$-risk, called the Bayes risk with respect to $\P$ and $L$, by
\begin{align*}
 \RR^*_{L,\P}:=\inf\left\{\RR_{L,\P}\left(f\right) \ |
 \ f : X\to \R \text{ measurable}\right\} \,
\end{align*}
and call a function $f^*_{L,\P} : X\to\R$, attaining the infimum, Bayes decision function. Given a data set $D:=((x_1,y_1),\ldots,(x_n,y_n))$ sampled i.i.d.\ from a probability measure $P$ on $X \times Y$, where $X \subset \mathbb{R}^d$ and $Y:=\lbrace -1,1 \rbrace$, the learning target in binary classification is to find a decision function $f_D \colon X \to  \mathbb{R}$ such that $\sign f_D(x)=y$  for new data $(x,y)$ with high probability. A loss function describing our learning goal is the classification loss $L_{\text{class}}: Y\times\R\to[0,\infty)$, defined by
\begin{align*}
 L_{\text{class}}(y,t):=\mathbf{1}_{(0,\infty]}(y\cdot \sign t),
\end{align*}
where $\text{sign } 0:=1$. Another possible loss function is the hinge loss $L_{\text{hinge}}: Y\times\R\to[0,\infty)$, defined by
\begin{align*}
 L_{\text{hinge}}(y,t):=\max \lbrace 0,1-yt \rbrace\,
\end{align*}
for $y=\pm 1,\, t \in \R$, which is even convex.
Since a well-known result by Zhang, e.g.\ \citet[Theorem~2.31]{StCh08}, shows that
\begin{align*}
\mathcal{R}_{L_{\text{class}},P}(f) - \mathcal{R}_{L_{\text{class},P}}^*   \leq \mathcal{R}_{ L_{\text{hinge}},P}(f)-\mathcal{R}_{L_{\text{hinge}},P}^*
\end{align*}
for all functions $f:X \to \mathbb{R}$, we consider in the following the hinge loss in our theory and write $L:= L_{\text{hinge}}$. We remark that for the hinge loss it suffices to consider the risk for function values restricted to the interval $[-1,1]$, since this does not worsen the loss and thus the risk. Therefore, we define by 
\begin{align*}
\cl{t}:=\max \lbrace -1, \min\lbrace t,1 \rbrace\rbrace
\end{align*}
for $t \in \R$ the clipping operator, which restricts values of $t$ to $[-1,1]$, see \citet[Chapter~2.2]{StCh08}. 
In order to derive a bound on the excess risks $\RLP(\fcl_{D,\lb})-\RB$, and in order to derive learning rates, we recall some notions from \citet[Chapter~8]{StCh08}, which describe the behaviour of $P$ in the vicinity of the decision boundary. To this end, let $\eta \colon X \to [0,1]$ be a version of the posterior probability of $P$, that means that the probability measures $P(\,\cdot \,|x)$, defined by $P(y=1|x)=:\eta(x),\,x\in X$, form a regular conditional probability of $P$. We write
\begin{align*}
X_{1} &:= \lbrace\,x \in X \colon \n(x)>1/2 \,\rbrace,\\
X_{-1} &:=  \lbrace\, x \in X \colon \n(x)<1/2 \,\rbrace.
\end{align*}
Then, we call the function $\Delta_{\n} \colon X \to [0,\infty)$, defined by
\begin{align*}
\begin{split}
\Delta_{\n}(x):=\begin{cases}
  \text{dist}(x,X_1)  & \text{if}\, x \in X_{-1},\\
  \text{dist}(x,X_{-1})  & \text{if}\, x \in X_{1},\\
  0 & \text{otherwise},
\end{cases}
\end{split}
\end{align*}
distance to the decision boundary, where $\text{dist}(x,A):=\inf_{x' \in A}\|x-x'\|_2$. Thus we can describe the mass of the marginal distribution $P_X$ of $P$ around the decision boundary by the following exponents: 
We say that $P$ has margin-noise exponent (MNE) $\b \in (0,\infty)$ if there exists a constant $c_{\text{MNE}} \geq 1$ such that
\begin{align*}
\int_{\lbrace \Delta_{\n}(x)<t \rbrace} |2\n(x)-1| \,dP_X(x) \leq (c_{\text{MNE}}t)^{\beta}
\end{align*}
for all $t>0$. That is, the MNE $\b$ measures the mass and the noise, i.e.\ points $x \in X$ with $\n(x) \approx 1/2$, around the decision boundary. Hence, we have a large 
MNE if we have low mass and/or high noise around the decision boundary. Furthermore, we say that $P$ has noise exponent (NE) $q \in [0, \infty ]$ if there exists a constant $c_{\text{NE}}>0$ such that  
\begin{align}\label{eq.noise.exponent}
P_X(\lbrace x\in X: |2\n(x) -1|<\e \rbrace ) \leq (c \e)^q 
\end{align}
for all $\e > 0$. Thus, the NE $q$, which corresponds to Tsybakov's noise condition, introduced in \citet{Tsybakov04a}, measures the amount of noise, but does not locate it. For examples of typical values of these exponents and relations between them we refer the reader to \citet[Chapter~8]{StCh08}.

For the theory of local SVMs it is necessary to specify the partition. Hence, we assume in the following that $X \subset [-1,1]^d$ and define for a set $T \subset X$ its radius by
\begin{align*}
r_T= \inf \lbrace \varepsilon >0: \exists \ s \in T \,\text{such that}\, T \subset B_{2}(s,\varepsilon)  \rbrace,
\end{align*}
where $B_{2}(s,\varepsilon):=\lbrace t \in T: \|t-s\|_2 \leq \varepsilon \rbrace$ with Euclidean norm $\| \cdot \|_2$ in $\mathbb{R}^d$. In the following let $(A_{j})_{j=1,\ldots,m}$ be a partition of $X$ such that all its cells have non-empty interior, that is $\mathring{A}_j \neq \emptyset$ for every $j \in \lbrace 1,\ldots,m \rbrace$, and such that for $r>0$ we have 
\begin{align}\label{ex. Ueberdeckung}
 r_{A_j} < r \leq 16 m^{-\frac{1}{d}}.
\end{align}
A simple partition that fulfills the latter condition is for example the partition by cubes with a specific side length. We refer the reader to Section \ref{sec.toy} for the computation of another
type of partition satisfying (\ref{ex. Ueberdeckung}).

In the following we restrict ourselves to local SVMs with Gaussian kernels. For this purpose, we denote for every $j \in \lbrace 1,\ldots,m \rbrace$ by $H_{\g_j}(A_j)$ the RKHS over $A_{j}$ with Gaussian kernel $k_{\g_j} : A_{j}\times A_{j}\to\R$, defined by
\begin{align*}
 k_{\g_j}(x,x'):=\exp\left(-\g_j^{-2}\|x-x'\|_2^2\right) 
\end{align*}
for some width $\g_j>0$. Furthermore, we define for $f\in H_{\g_j}(A_j)$ the function $\hat{f} : X\to\R$ by
\begin{align*}
 \hat{f}(x) := \begin{cases}
                f(x)\,, & x\in A_j\,, \\
                0\,, & x\notin A_j\,.
               \end{cases}
\end{align*}
Then, according to \citet[Lemma~2]{EbStXXa} the space $\hat{H}_{\g_j}:=\{\hat{f} : f\in H_{\g_j}(A_j)\}$ equipped with the norm 
\begin{align*}
 \|\hat{f}\|_{\hat{H}_{\g_j}}:=\|f\|_{H_{\g_j}(A_j)} 
\end{align*}
is an RKHS over $X$. Thus, local SVMs for Gaussian kernels solve  the optimization problem 
\begin{align*}
 f_{D_{j},\lb_j,\g_j}=\underset{\hat{f}\in \hat{H}_{\g_j}}{\arg\min} \,\lb_j \|\hat{f}\|_{\hat{H}_{\g_j}}^2
 + \frac{1}{n}\sum_{x_i,y_i \in D_j}L(y_i,f(x_i)),
\end{align*}
for every $j \in \lbrace 1,\ldots,m \rbrace$, where $\lb_j,\g_j>0$. Then, for the vectors $\bs\g:=(\g_1,\ldots,\g_m)$ and $\bs\lb:=(\lb_1,\ldots,\lb_m)$ the decision function $f_{D,\bs\lb,\bs\g} : X \to \R$ is defined by
\begin{align} \label{VP-SVM}
 f_{D,\bs\lb,\bs\g}(x) := \sum_{j=1}^{m} f_{D_{j},\lb_j,\g_j}(x).
\end{align}
Note that the clipped decision function $\cl{f}_{D,\bs\lb,\bs\g} : X \to [-1,1]$ is then defined by the sum of the clipped empirical solutions $\cl{f}_{D_{j},\lb_j,\g_j}$ since for all $x \in X$ there is exactly one $f_{D_{j},\lb_j,\g_j}$ with $f_{D_{j},\lb_j,\g_j}(x) \neq 0$. We finally 
introduce the following set of assumptions:
\begin{itemize}
 \item[\textbf{(A)}]
Let $(A_{j})_{j=1,\ldots,m}$ be a partition of $X$ and $r>0$, 
such that for every $j \in \lbrace 1,\ldots,m \rbrace$ we have $\mathring{A}_j \neq \emptyset$ and (\ref{ex. Ueberdeckung}). In addition, for every 
$j\in\{1,\ldots,m\}$ let $H_{\g_j}(A_j)$ be the RKHS of the Gaussian kernel 
$k_{\g_j}$ over $A_j$ with width $\g_j>0$. Furthermore, we use the notation $\g_{\text{max}}:=\max\lbrace \g_1, \ldots, \g_m \rbrace$.
\end{itemize}
Now, we present an upper bound on the excess risk for the hinge loss and Gaussian kernels. 
\begin{theorem}\label{corollar}
Let $Y=\lbrace{-1,1 \rbrace}$ and let $L: Y\times\R\to [0,\infty)$ be the hinge loss. Let $P$ be a distribution on $X \times Y$ 
with MNE $\b \in (0,\infty)$ and NE $q \in [0, \infty ]$. Moreover, let (A) be satisfied. 
Then, for all $p \in (0,1)$, $n \geq 1$, $\t \geq 1$ with $\t \leq n$, $\bs\lb=(\lb_1,\ldots,\lb_m)\in (0,\infty)^m$, $\bs\g=(\g_1,\ldots,\g_m)\in (0,r]^m$ the SVM given by \eqref{VP-SVM} 
satisfies 
\begin{align*}
\ifNOTarxiv&\fi
\RLP(\fcl_{D,\bs\lb,\bs\g})-\RB
\ifNOTarxiv\nonumber\\ \fi
  &\Leq  C_{\b,d,p,q} \Biggl(    \sum_{j=1}^m \lb_j \g_j^{-d}+ \g_{\text{max}}^{\b} 
    + \left(\frac{\t}{n}\right)^{\frac{q+1}{q+2}} \nonumber\\ 
  &\quad +\left( r^{2p} \left(\sum_{j=1}^m  \lb_j^{-1}  \g_j^{-\frac{d+2p}{p}} P_X(A_{j}) \right)^p n^{-1} \right)^{\frac{q+1}{q+2-p}}  \Biggr)
\end{align*}
with probability $P^n$ not less than $1-3e^{-\t}$, where the constant $C_{\b,d,p,q}>0$ depends only on $d,\b,p$ and $q$.
\end{theorem}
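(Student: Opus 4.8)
The plan is to realize the global decision function~\eqref{VP-SVM} as a single clipped SVM over a glued RKHS and then invoke the generic oracle inequality for clipped SVMs of \citet[Chapter~7]{StCh08}. Since the cells are disjoint, the space $\widetilde H:=\bigoplus_{j=1}^m\hat H_{\g_j}$ equipped with $\langle f,g\rangle_{\widetilde H}:=\sum_{j=1}^m\lb_j\langle\hat f_j,\hat g_j\rangle_{\hat H_{\g_j}}$ is an RKHS over $X$ with kernel $\widetilde k=\sum_{j=1}^m\lb_j^{-1}\hat k_{\g_j}$, and the optimization defining~\eqref{VP-SVM} separates cell by cell, so that $f_{D,\bs\lb,\bs\g}=\argmin{f\in\widetilde H}\snorm{f}_{\widetilde H}^2+\RR_{L,D}(f)$; in particular $\fcl_{D,\bs\lb,\bs\g}$ is the clipped SVM over $\widetilde H$ with regularization parameter~$1$. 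Moreover, since exactly one summand of~\eqref{VP-SVM} is nonzero at any point, the excess risk decomposes as $\RLP(\fcl_{D,\bs\lb,\bs\g})-\RB=\sum_{j=1}^m\bigl(\RR_{L,P_j}(\cl{f}_{D_j,\lb_j,\g_j})-\RR^*_{L,P_j}\bigr)$, where $P_j$ denotes the restriction of $P$ to $A_j\times Y$; this is ultimately what puts $\sum_j$-terms into the bound. It then remains to supply the oracle inequality, applied to $\widetilde H$, with four ingredients: a supremum bound, an approximation-error bound, a variance bound, and a capacity bound.

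For the approximation error I would use a single global Gaussian approximant realized inside $\widetilde H$: let $g$ be the Gaussian convolution of the Bayes function $\sign(2\n-1)$ from \citet[Chapter~8]{StCh08}, which satisfies $\inorm{g}\le1$, $\snorm{g}_{H_{\g_{\text{max}}}(X)}^2\le C_d\,\g_{\text{max}}^{-d}$ and, by the margin-noise estimate together with the MNE of $P$, $\RLP(\cl{g})-\RB\le C_{\b,d}\,\g_{\text{max}}^{\b}$. Restricting $g$ to each cell and using the scaling embedding $H_{\g_{\text{max}}}(A_j)\hookrightarrow H_{\g_j}(A_j)$, which is valid because $\g_j\le\g_{\text{max}}$ and comes with norm blow-up $(\g_{\text{max}}/\g_j)^{d/2}$, shows $g\in\widetilde H$ with $\snorm{g}_{\widetilde H}^2=\sum_{j=1}^m\lb_j\snorm{g|_{A_j}}_{H_{\g_j}(A_j)}^2\le C_d\sum_{j=1}^m\lb_j\g_j^{-d}$. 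Hence $\inf_{f\in\widetilde H}\snorm{f}_{\widetilde H}^2+\RLP(\cl{f})-\RB\le C_{\b,d}\bigl(\sum_{j}\lb_j\g_j^{-d}+\g_{\text{max}}^{\b}\bigr)$, which yields the first two summands. Here the treatment differs genuinely from the least-squares case of \cite{MeSt16a}, since the approximation error is measured through the margin-noise exponent rather than in $L_2$.

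For the variance bound, note that the posterior probability $\n$ is unaffected by the spatial decomposition, so the NE $q$ of $P$ and the variance-bound estimates for the hinge loss under Tsybakov's noise condition, see \citet[Chapter~8]{StCh08}, give $\E_P\bigl(L(Y,\cl{f}(X))-L(Y,f^*_{L,P}(X))\bigr)^2\le V\bigl(\E_P(L(Y,\cl{f}(X))-L(Y,f^*_{L,P}(X)))\bigr)^{\vartheta}$ with exponent $\vartheta=\tfrac{q}{q+1}$ and $V$ depending only on $q$ and $c_{\text{NE}}$; this is the second, and more delicate, departure from \cite{MeSt16a}, because for $q<\infty$ the exponent $\vartheta$ is strictly below the optimal value $1$, which is what eventually forces the more restrictive conditions on the cell radii. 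For the capacity I would bound the expected dyadic entropy numbers of the unit ball $B_{\widetilde H}$ in the empirical $L_2(D)$-norm: as $\widetilde k=\sum_j\lb_j^{-1}\hat k_{\g_j}$ is block-diagonal with disjointly supported summands, $e_i(\id\colon B_{\widetilde H}\to L_2(D))$ decomposes over cells, and combining the single-cell Gaussian entropy-number bounds of \citet[Chapter~6]{StCh08} (which on a cell of radius at most $r$ and width $\g_j$ pick up factors of $r$ and of $\g_j$ and decay like $i^{-1/(2p)}$ for every $p\in(0,1)$), rescaled by $\lb_j^{-1/2}$ for the radius of the admissible ball on cell~$j$ and weighted by the sample fraction $n_j/n$ falling into $A_j$, then taking expectations so that $\E[n_j/n]=P_X(A_j)$ appears, leads to a capacity constant $a$ with
\[
a^{2p}\;\le\;C_{d,p}\,r^{2p}\Bigl(\,\sum_{j=1}^{m}\lb_j^{-1}\g_j^{-\frac{d+2p}{p}}P_X(A_j)\Bigr)^{p}.
\]
Making this combination precise with the correct $r$-dependence, and controlling the randomness of the $n_j$, is the bulk of the work.

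Finally, plugging into the oracle inequality of \citet[Chapter~7]{StCh08} the supremum bound $B=2$ (the clipped hinge loss lies in $[0,2]$), the variance pair $(\vartheta,V)$ with $\vartheta=\tfrac{q}{q+1}$, the capacity constant $a$, and the approximation-error bound above, all with regularization parameter~$1$, gives with $P^n$-probability at least $1-3e^{-\t}$ a bound of the form
\[
\RLP(\fcl_{D,\bs\lb,\bs\g})-\RB\;\lesssim\;\sum_{j=1}^{m}\lb_j\g_j^{-d}+\g_{\text{max}}^{\b}+\Bigl(\frac{V\t}{n}\Bigr)^{\frac{1}{2-\vartheta}}+\frac{B\t}{n}+\Bigl(\frac{a^{2p}}{n}\Bigr)^{\frac{1}{2-p-\vartheta(1-p)}}.
\]
A short computation gives $\tfrac{1}{2-\vartheta}=\tfrac{q+1}{q+2}$ and $\tfrac{1}{2-p-\vartheta(1-p)}=\tfrac{q+1}{q+2-p}$; since $\t\le n$ and $\tfrac{q+1}{q+2}<1$ one has $\tfrac{B\t}{n}\le 2\bigl(\tfrac{\t}{n}\bigr)^{\frac{q+1}{q+2}}$, and inserting the expression for $a^{2p}$ and absorbing $V$ into the constant turns this into exactly the claimed inequality; the confidence level stays at $1-3e^{-\t}$ precisely because a single application of the oracle inequality to the glued SVM replaces what would otherwise be a union bound over the $m$ cells. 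The main obstacle is the capacity estimate just sketched — combining the per-cell Gaussian entropy bounds through the $\lb_j$-rescaled, $P_X(A_j)$-weighted block structure while keeping the $r$-dependence sharp, and then carrying out the bookkeeping, against the non-optimal variance exponent $\vartheta=\tfrac{q}{q+1}$, needed to land exactly on the exponents $\tfrac{q+1}{q+2}$ and $\tfrac{q+1}{q+2-p}$.
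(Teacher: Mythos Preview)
Your approach is essentially the one the paper takes: realize~\eqref{VP-SVM} as a single SVM over the direct-sum RKHS, bound the approximation error by restricting a global Gaussian approximant of width $\g_{\text{max}}$ to the cells via the embedding $H_{\g_{\text{max}}}(A_j)\hookrightarrow H_{\g_j}(A_j)$ (this is exactly what the paper does, citing \citet[Theorem~8.18]{StCh08}), feed in the hinge-loss variance bound with $\vartheta=q/(q+1)$, and then apply a single oracle inequality to the glued problem to avoid a union bound over cells.

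The only substantive difference is bookkeeping on the capacity side. The paper does not bound empirical entropy and then pass to expectations; instead it invokes a ready-made local oracle inequality, \citet[Theorem~5]{EbStXXa}, which already takes as input the per-cell \emph{population} entropy bounds $e_i(\id\colon H_{\g_j}(A_j)\to L_2(P_{X|A_j}))\le a_j i^{-1/(2p)}$ and combines them into a single constant $a=\max\{c_p m^{1/2}(\sum_j\lb_j^{-p}a_j^{2p})^{1/(2p)},2\}$. The factor $\sqrt{P_X(A_j)}$ enters deterministically through the per-cell bound $a_j=\tilde c_p\sqrt{P_X(A_j)}\,r^{(d+2p)/(2p)}\g_j^{-(d+2p)/(2p)}$ from \citet[Theorem~6]{EbStXXa}, not through $\E[n_j/n]$; and the stray factor of $m$ that appears when you pass from $\sum_j(\cdot)^p$ to $(\sum_j\cdot)^p$ via $\|\cdot\|_p\le m^{(1-p)/p}\|\cdot\|_1$ is what gets absorbed by $mr^d\le 16^d$ from assumption~(A). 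Your route through empirical entropy plus an expectation would need an extra concentration step to reach the same place; the paper's route sidesteps this by quoting the two \citet{EbStXXa} results. Either way you land on the same $a^{2p}$, and the final substitution into the oracle inequality with $\vartheta=q/(q+1)$, together with $\t/n\le(\t/n)^{(q+1)/(q+2)}$, is identical to what you wrote.
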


The main idea for the proof is that $f_{D,\bs\lb,\bs\g}$ is an SVM solution for a particular RKHS. To be more precise, it is easy to show with a generalization of \citet[Lemma~3]{EbStXXa} that for RKHSs $\hat{H}_{\g_j}$ on $X$ and a vector $\bs\lb:=(\lb_1,\ldots,\lb_m)>0$ the direct sum   
\begin{align*}
 H := \bigoplus_{j=1}^m \hat{H}_{\g_j} 
      = \Bigg\lbrace{ f=\sum_{j=1}^m f_j : f_j\in\hat{H}_{\g_j} \text{ for all } j\in J \Bigg\rbrace} 
\end{align*}
is again an RKHS if it is equipped with the norm
\begin{align*}
 \|f\|_{H}^2 = \sum_{j=1}^m \lb_j \|f_j\|_{\hat{H}_{\g_j}}^2\,.
\end{align*}
Obviously, $f_{D,\bs\lb,\bs\g} \in H$. We remark that the latter construction 
actually holds for RKHSs with arbitrary kernels. For the proof it is necessary on the one hand to chose suitable RKHSs in order to bound the approximation error and on the other hand to bound the entropy numbers of the operator $\text{id}: H_{\g_j}(A_j) \mapsto L_2(P_{X_{|A_j}})$, which describe in some sense the size of the RKHS $H_{\g_j}(A_j)$. Unfortunately, such
bounds contain constants which depend on the cells $A_j$ and which are in general hard to control. However, for Gaussian kernels we obtain such bounds if we restrict $\g_j$ by $r$ for every $j \in \lbrace 1,\dots,m \rbrace$, which explains our restriction to Gaussian kernels. Moreover, the proof 
actually
shows that $\g_j\leq r$ can be replaced by $\g_j \leq cr$ for a constant $c$, which is independent of $p,m,\t,\lb$ and $\g$. Furthermore, using this entropy number bound leads to a dependence of a parameter $p$ on the right-hand side of the oracle inequality, where small $p$ lead to an unknown behaviour in the constant $C_{\b,d,p,q}$. This problem is well-known in the Gaussian kernel case, see \citet{StSc07a} or \citet{EbSt11a}, and there is not given a solution for this problem yet.

Let us assume that all cells have the same kernel parameter $\g$ and regularization parameter $\lb$. Then, the oracle inequality stated in Theorem \ref{corollar} coincides up to constants and to the parameter $p$, which can be chosen arbitrary small, the oracle inequality for global SVMs stated in \citet[Theorem~8.25]{StCh08} for $\t=\infty$. Furthermore, we remark that our oracle inequality is formally similar to the inequality for local SVMs for the least square loss, see \citet[Theorem~7]{EbStXXa} if we assume that the Bayes decision function is contained in a Besov space with smoothness $\a=\b/2$. 

In the next theorem we show learning rates by choosing appropriate sequences of $r_{n}, \lb_{n,j}$ and $\g_{n,j}$.

\begin{theorem}\label{Thm hinge rate}
Let $\t \geq 1$ be fixed and $\nu \in \left(0, \frac{q+1}{\b(q+2)+d(q+1)} \right]$. Under the assumptions of Theorem~\ref{corollar} and with
\begin{align*}
 r_n &\,=\, c_1 n^{-\nu} \,,\\ 
 \lb_{n,j} &\,=\, c_2 r_n^d n^{-\frac{(\b+d)(q+1)}{\b(q+2)+d(q+1)}}\,, \\
 \g_{n,j} &\,=\, c_3 n^{-\frac{(q+1)}{\b(q+2)+d(q+1)}}\,,  
\end{align*}
for every $j \in \lbrace 1, \ldots, m_n \rbrace$, we have for all $n \geq 1$ and $\xi >0$ that 
\begin{align*}
\ifNOTarxiv&\fi\RLP(\fcl_{D,\bs\lb_n,\bs\g_n})-\RB \ifNOTarxiv\\\fi
 &\Leq C_{\b,\nu,\xi,d,q} \tau^{\frac{q+1}{q+2}} \cdot  n^{-\frac{\b(q+1)}{\b(q+2)+d(q+1)}+\xi}
\end{align*}
holds with probability $P^n$ not less than $1-3e^{-\t}$, where $\bs\lb_n:=(\lb_{n,i})_{i=1,\ldots,m_n}$ as well as $\bs\g_n:=(\g_{n,i})_{i=1,\ldots,m_n}$ and where $C_{\b,\nu,\xi,d,q},c_1,c_2,c_3$ are positive constants. 
\end{theorem}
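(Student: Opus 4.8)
The plan is to substitute the chosen sequences $r_n$, $\bs\lb_n$, $\bs\g_n$ into the oracle inequality of Theorem~\ref{corollar} and balance its four terms against the target rate $n^{-\b(q+1)/S}$, where $S:=\b(q+2)+d(q+1)$, so that the target exponent is $\b\a$ with $\a:=(q+1)/S$. Since every cell uses the same width and regularization, one has $\g_{\max}=\g_{n}$, $\sum_{j}\lb_{n,j}\g_{n,j}^{-d}=m_{n}\lb_{n}\g_{n}^{-d}$, and, because $\sum_{j=1}^{m_n}P_X(A_j)=1$, the double sum in Theorem~\ref{corollar} collapses to $\lb_{n}^{-1}\g_{n}^{-(d+2p)/p}$. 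From \eqref{ex. Ueberdeckung} we get $m_{n}\leq 16^{d}r_{n}^{-d}$, and from $\nu\leq\a$ together with $n\geq1$ we get $\g_{n,j}=c_{3}n^{-\a}\leq c_{1}n^{-\nu}=r_{n}$ once $c_{3}\leq c_{1}$, so the hypotheses $\bs\g\in(0,r]^{m}$ and $\bs\lb\in(0,\infty)^{m}$ of Theorem~\ref{corollar} hold; the constants $c_1,c_2,c_3$ are fixed small enough that a partition of the required type exists for all $n\geq1$.

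I would then bound the four terms separately. The first term satisfies $m_{n}\lb_{n}\g_{n}^{-d}\leq 16^{d}c_{2}c_{3}^{-d}\,n^{-\b\a}$, because the factor $r_{n}^{+d}$ from $\lb_{n,j}$ cancels the factor $r_{n}^{-d}$ coming from $m_n$ --- which is precisely why $\lb_{n,j}$ is chosen proportional to $r_{n}^{d}$. The second term is $\g_{n}^{\b}=c_{3}^{\b}\,n^{-\b\a}$, and the third term is $(\t/n)^{(q+1)/(q+2)}\leq \t^{(q+1)/(q+2)}\,n^{-\b\a}$ since $(q+1)/(q+2)\geq\b\a$ (equivalently $S\geq\b(q+2)$). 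For the fourth term, inserting the sequences shows that its exponent of $n$ equals $\tfrac{q+1}{q+2-p}\,E(p,\nu)$ with $E(p,\nu)=p\nu(d-2)+\tfrac{(q+1)(p\b+pd+2p+d)}{S}-1$. As $E(p,\cdot)$ is affine in $\nu$, its supremum over $\nu\in(0,\a]$ is the larger of its values at $\nu=\a$ and $\nu\to0^{+}$; in both cases this value tends to $\tfrac{(q+1)d}{S}-1=-\b(q+2)/S$ as $p\to0^{+}$, hence $\tfrac{q+1}{q+2-p}\sup_{\nu}E(p,\nu)\to -\b\a$. Thus, given $\xi>0$, one fixes $p=p(\b,d,q,\xi)\in(0,1)$ small enough that this exponent is at most $-\b\a+\xi$, which pins the fourth term at $C'\,n^{-\b\a+\xi}$. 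Collecting the four bounds, using $\t^{(q+1)/(q+2)}\geq1$ and $n^{-\b\a}\leq n^{-\b\a+\xi}$ for $n\geq1$, and renaming $C_{\b,d,p,q}$ from Theorem~\ref{corollar} together with $p(\xi)$ and the powers of $c_1,c_2,c_3,16$ into one constant $C_{\b,\nu,\xi,d,q}$, yields the claimed bound with $P^{n}$-probability at least $1-3e^{-\t}$.

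The step I expect to be the main obstacle is this last balancing of the fourth term: one must check that the parameter $p$, which sits both in the outer exponent $\tfrac{q+1}{q+2-p}$ and inside $E(p,\nu)$, genuinely drives the fourth term down to the target rate as $p\to0$, while remembering that the generic constant $C_{\b,d,p,q}$ of Theorem~\ref{corollar} blows up in that limit --- this is exactly the Gaussian-kernel effect noted after Theorem~\ref{corollar}, and it forces the harmless $+\xi$ into the exponent instead of the clean rate. The remaining points are routine: the cancellation of $r_n^{\pm d}$ in the first term, the affine (hence endpoint-controlled) dependence of $E$ on $\nu$, and the case $n<\t$ not covered by Theorem~\ref{corollar}, which is handled by enlarging the constant beyond $2$ and using that the clipped excess hinge risk never exceeds $2$, together with the already-noted inequality $(q+1)/(q+2)\geq\b\a$ and $n\leq\t$.
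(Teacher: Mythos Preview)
Your proposal is correct and follows essentially the same route as the paper's proof: both plug the chosen sequences into the oracle inequality of Theorem~\ref{corollar}, use $m_n\leq 16^{d}r_n^{-d}$ and $\sum_{j}P_X(A_j)=1$ to collapse the sums, and then balance the four terms, sending $p\to 0$ so that the residual exponent in the fourth term is absorbed into the slack $\xi$. Your write-up is a bit more careful about verifying $\g_{n}\leq r_n$ and about the degenerate case $n<\t$, but these are minor hygiene points and the core argument is identical.
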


Note that the restriction of the parameter $\nu$ in Theorem \ref{Thm hinge rate} is set to ensure that 
$\sup_{n \geq 1} \g_{n,j}/r_{n} < \infty$, as we mentioned after Theorem \ref{corollar}. The learning rate stated in Theorem \ref{Thm hinge rate} coincides always with 
the fastest known rate which can be achieved by a global SVM, cf. \citet[Theorem~8.26 and (8.18)]{StCh08}. Let us consider some special cases for the parameters $\b$ and $q$. In the case of "benign noise", that is $q=\infty$, our learning rate reduces to
\begin{align*}
n^{-\frac{\b}{\b+d}+\xi},
\end{align*}
and is only less sensitive to the dimension $d$ if $\b$ is large. Next, let us assume that $q=1$ such that $P$ has a rather moderate noise concentration and that $\b=2$, which means that we have additionally much mass around the decision boundary. For examples of distributions having these parameters we refer to the examples in \citet[Chapter~8]{StCh08}. The chosen parameters $q$ and $\b$
yield the rate
\begin{align*}
n^{-\frac{2}{3+d}+\xi}
\end{align*}
and we observe that the dimension $d$ has a high impact, which means that the rate gets worse the higher $d$. We refer the reader to Section~\ref{sec.toy}, where we created a toy example for a distribution having these parameters $\b$ and $q$ and where we compare this rate with the experimental one. Since the class of considered distributions contains the ones, whose marginal distribution $P_X$ is the uniform distribution, a bad dependence on $d$ is not surprisingly as these distributions usually among those which cause the curse of dimensionality. However, if the data lies for example on a $d_{\mathcal{M}}$-(low)-dimensional rectifiable manifold $\mathcal{M}$, we believe that one is able to improve the rate since one would learn the local SVM only on a few cells---instead of learning on $m=O(r^d)$ cells, one only has to learn on $m=O(r^{d_{\mathcal{M}}})$ cells.

Clearly, to obtain the rates in Theorem \ref{Thm hinge rate}
we need to know the parameters $\b$ and $q$. 
However, such rates can also be obtained by a data-dependent parameter selection strategy without knowing the parameters. For example, \citet{EbStXXa} presented for the least-square loss the training validation Voronoi partition support vector machine (TV-VP-SVM) and showed that this learning method achieves adaptively the same rates as the rates for local SVM for the least-square loss.  We remark that this method can be adapted to our case, that is for the hinge loss, since a key ingredient is an oracle inequality having the structure of \citet[Theorem~7]{EbStXXa}, which is given in our case, as we mentioned in the discussion before Theorem \ref{Thm hinge rate}.
For more details to parameter selection methods we refer the reader to \citet[Section~5]{EbStXXa} and \citet[Chapter~6.5 and Chapter~8.2]{StCh08}. At this point we finally remark that the mentioned adaptive strategies obtain the rate
\begin{align*}
n^{-\frac{\b(q+1)}{\b(q+2)+d(q+1)}+\xi}
\end{align*}
for all $\b$ and $q$ satisfying 
$\nu \leq \frac{\b(q+1)}{\b(q+2)+d(q+1)}$. Consequently, for larger $\nu$, which lead to 
faster training times, the range of adaptivity becomes smaller, and vice-versa.

\section{Toy Example}
\label{sec.toy}

To illustrate the theoretical results we now give a toy example
by mixing two multivariate Gaussians, one for $y=1$ and one for $y=-1$,
see Figure~\ref{figure:toy}.
\begin{figure}[ht]
  \centering
\ifarxiv
  \includegraphics[width=0.4\textwidth]{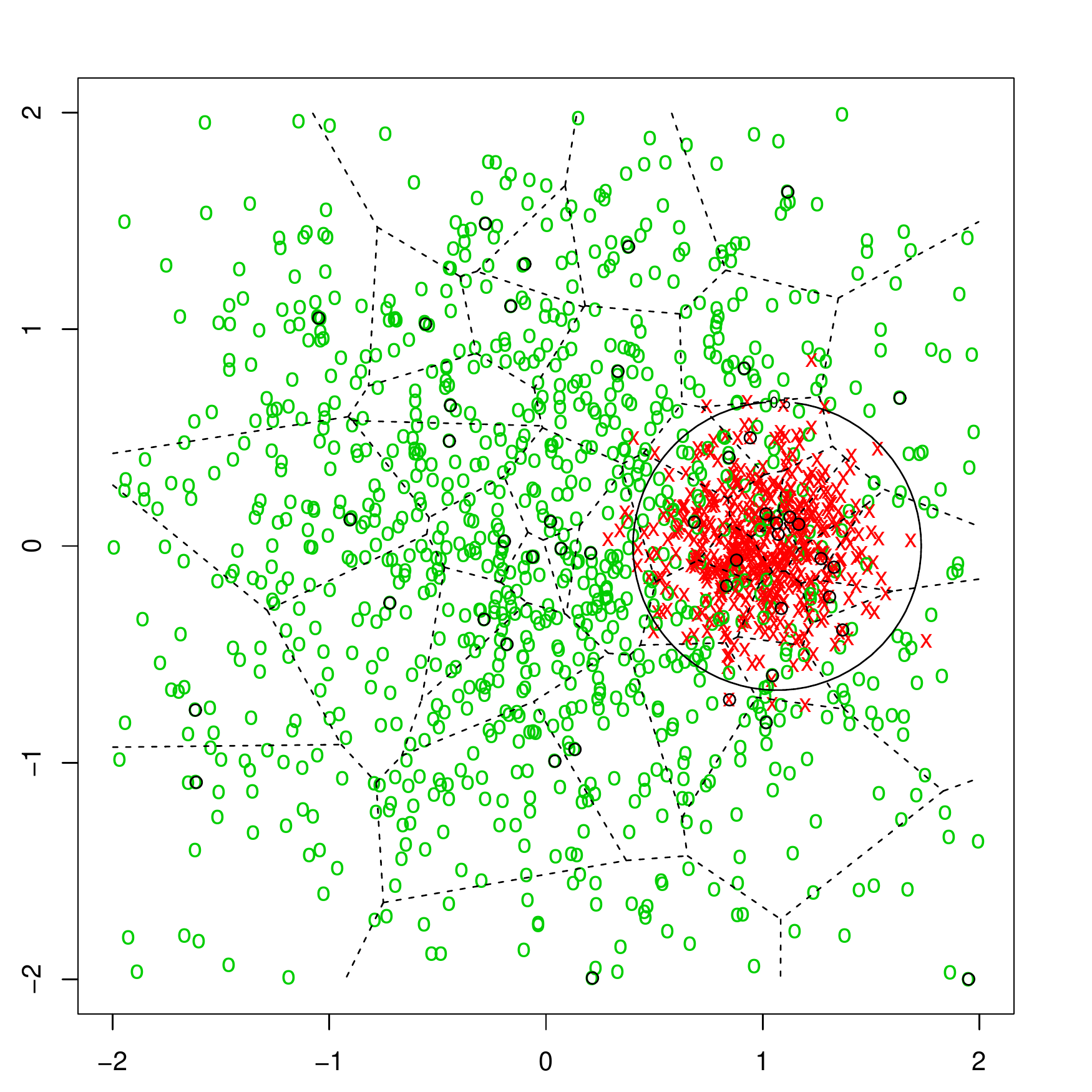}
\else
  \includegraphics[width=\textwidth/5]{plot-toy}
\fi
  \caption{Toy model: Mixture of two Gaussians.
  The decision boundary is depicted as almost a circle
  and the dotted lines mark a Voronoi partition induced
  by the black center samples.
  \label{figure:toy}}
\end{figure}
We define $X:=[-2,2]^d$ and $x_1:=(1,0,\ldots,0)\in X$
and $\vt:=0.6$. Moreover, we set for all events $A\subset X\times Y$:
\begin{align*}
P(A) &{}:= \vt\,\int_{A^+} \phi(x) \, dx
+ (1-\vt)\,\int_{A^-} \psi(x) \, dx,
\end{align*}
where $A^+:=\{x\mid (x,1)\in A\}$ and $A^-:=\{x\mid (x,-1)\in A\}$. Here $\phi$ and $\psi$ are the densities of the multivariate normal
distribution around the origin, and $x_1$ resp.\ and of variance
$1$ and $\tfrac18$ resp. on $X$.
In this case $\eta(x)$ can be calculated and it is easy
to see that the decision boundary is an ellipsoid,
the NE is $q=1$, and the MNE is $\beta=2$.
Hence, we can use Theorem~\ref{Thm hinge rate} by using
\begin{align*}
\nu &{}= \frac1{3+d}, &
r_n &{}= c_1 n^{-\frac1{3+d}}, \\
\lambda_n &{}= c_2 n^{-\frac{2+d}{3+d}}, &
\gamma_n &{}= c_3 n^{-\frac1{3+d}}.
\end{align*}
to achieve the rate $n^{-\frac2{3+d}+\e}$. A (even faster) rate than the theoretical from Theorem~\ref{Thm hinge rate}.

\begin{figure}[htb]
  \centering
\ifarxiv
  \includegraphics[width=\textwidth/2]{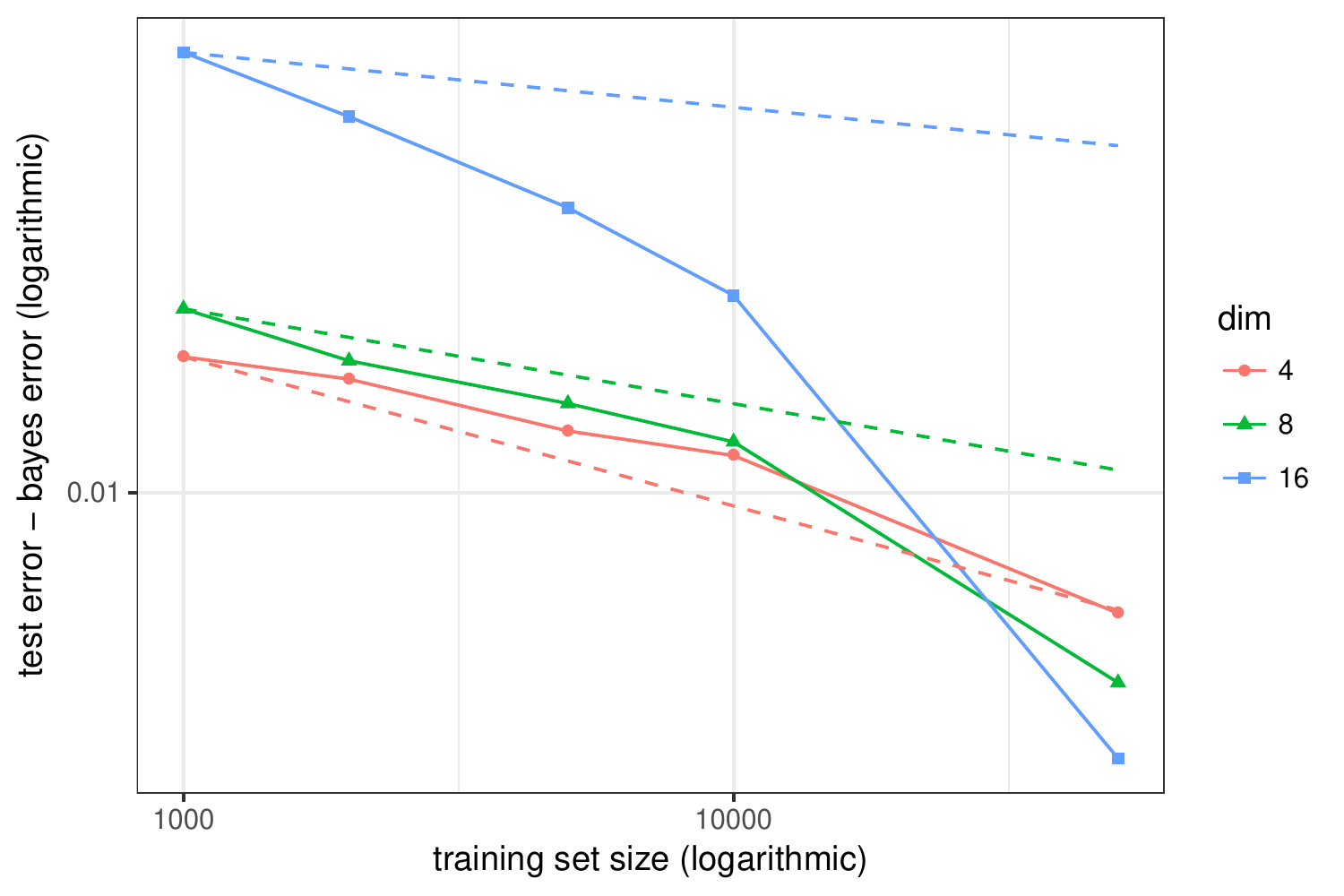}
\else
  \includegraphics[width=\textwidth/4]{plot-toy-err}
\fi
  \caption{Obtained excess risks for the toy models at dimensions $4,8,16$.
  We decrease the maximal data radius per cell as in Theorem~\ref{Thm hinge rate} (average of 20 runs).
  The rates from Theorem~\ref{Thm hinge rate} are marked in dashed lines
  with normalization to match the first data point.
  \label{figure:toy-err}}
\end{figure}

%
%

\section{Experiments}
\label{sec.experiments}

We now use the developed methods for large scale data sets
in order to understand whether the theoretical and synthetic results
also transfer to real world data.
We intend to demonstrate that partitioning
allows for large $n$ to give efficiently good results,
but only if one uses spatial decomposition.
As discussed above, global kernel methods become unfeasible
for $n\ge 100\;000$, but there is already known the decomposition method of 
\emph{random chunks}:
split the data into samples and train on these smaller sets first,
then average predictions over these samples.
We will see that spatial decompositions often outperform
random chunks in terms of test error---while testing is much faster.
We did not consider any other method for speeding up SVMs, e.g.\ random fourier features,
since these can be naturally combined with our spatial approach. Indeed, if such
a method is faster than a global SVM on data sets of, say 20.000 and more samples, then one could
also use this method on each cell of that size, which in turn would further decrease the
overall training time of our approach.
In this sense, our reported training times are a kind of worst-case scenario.

In Section~\ref{subsec:ecbdl} we will demonstrate that the spatial decomposition approach
even can be used for problems with dozens of millions of samples and hundreds of dimensions, that is too big for a single machine.

\begin{figure}[ht]
  \centering%
\ifarxiv
  \includegraphics[width=0.8\textwidth]{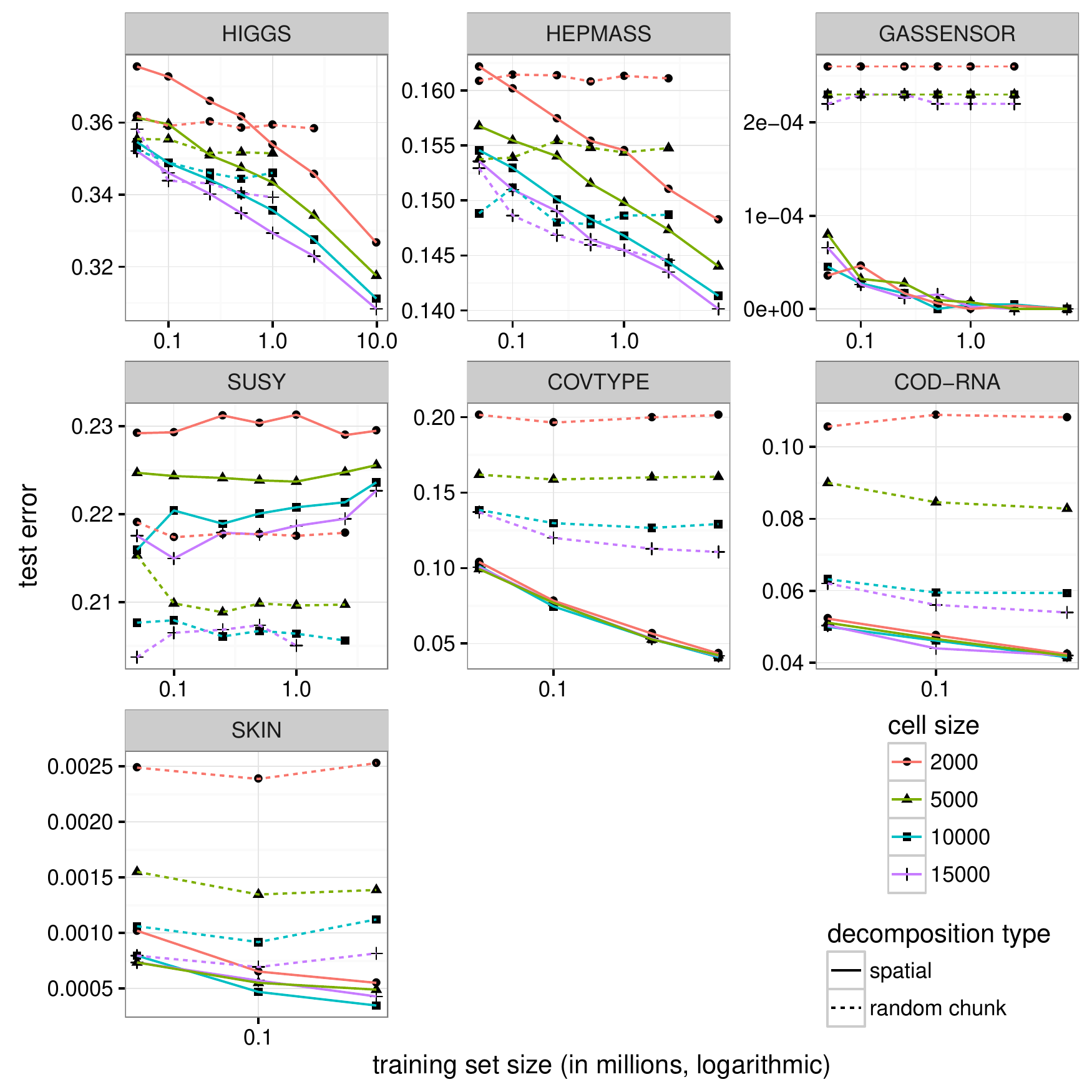}
\else
  \includegraphics[width=\textwidth/2-1ex]{plot-size-error}
\fi
  \caption{Test errors for large-scale data sets.
  The training set size is the most influential factor for
  spatial decompositions.
  Bigger cell sizes give only sometimes earlier better results.
  Random chunk decomposition only profits from bigger and not from more chunks.
  \label{figure:size-error}}
\end{figure}

\begin{table}[hbtp]
  \caption{Overview of the data sets we used.}
  \label{datasets-table}
  \centering\footnotesize
  \begin{tabular}{lrrrl}
    \toprule
    Name     & train size & test size & dims 
    & Source \\
    \midrule
\sc higgs    &  9\,899\,999& 1\,100\,001&   28 
& UCI\\
\sc hepmass  &  7\,000\,000& 3\,500\,000&   28 
& UCI\\
\sc gassensor&  7\,548\,087&  838\,678&   18 
& UCI\\
\sc susy     &  4\,499\,999&  50\,0001&   18 
& UCI\\
\sc covtype & 464\,429 & 116\,583 & 54 
& UCI\\
\sc cod-rna & 231\,805 & 99\,347 & 9 
& LIBSVM \\
\sc skin & 196\,045 & 49\,012 & 3 
& UCI \\
    \bottomrule
  \end{tabular}
\end{table}

%
%
%
%

For the data sets, we looked on the UCI and LIBSVM repositories for the
biggest examples, which did have firstly a suitable learning target,
secondly only numerical features,
and thirdly not too many dimensions since for high dimensions spatial decompositions
become at least debatable. Hence, we did not consider any image data sets.
See Table~\ref{datasets-table} for an overview of the data sets and the train/test splits.
We trained on one hand using the full training set, and on the other hand
using a sample of the full training set of
the 6 sizes $n=50\;000,\ldots,1\;000\;000,2\;500\;000$.

Even though our theoretical results are by control of radii,
in applications it is more important to control the computational cost, namely the maximal number of samples in the cells as we discussed in Section~\ref{sec.locSVM}. 
Hence, for each cell size $2000,5000,10000,15000$,
each of the 7 training sets was first split spatially into cells of that size
and a SVM was trained on each cell.
This means that on each cell five-fold cross validation was performed for a $10\times10$ geometrically spaced hyper-parameter grid where the endpoints were scaled to accommodate the number of samples in every fold, the cell size, and the dimension.
Then, for the full testing set the test error was computed
by assigning each test sample to the cell it spatially belongs to
and the cell's SVM was used to predict a label for it.
For comparison we also performed these experiments using random chunks:
Each training set was split uniformly into chunks of size $2000,5000,10000,15000$.
There, due to time constraints, for the bigger data sets we calculated the testing error using a test set of size 100\;000.

We used \texttt{liquidSVM} \citep{liquidSVM}, our own SMO-type implementation in \texttt{C++}.
As architecture, we used Intel\textsuperscript\textregistered\ Xeon\textsuperscript\textregistered\ CPUs (E5-2640 0 at 2.50GHz, May 2013) with Ubuntu Linux.
There were two NUMA-sockets each with a CPU having 6 physical cores, but multi-threading was only used to compute
the kernel matrix in training and the test error.
The data-partitioning and the solver are single-threaded.
Even though there were 128GB RAM per NUMA-socket available, the processes were limited to use at most 64GB to give results which could be compared on other workstations.
The smaller cell sizes use considerably less memory.
But this way, we restricted us to only use cell sizes up to 15\;000.
For random chunks in the biggest cases even that was impossible.

\begin{table}[hbtp]
  \caption{Time to train the local SVM including
           5-fold cross-validation on a $10\times10$ hyper-parameter grid.
}
  \label{time-table}
  \centering\footnotesize
  \begin{tabular}{lrrrr}
    \toprule
& 2000& 5000& 10000& 15000
\\
\midrule
&\multicolumn{4}{c}{training time (in min.)} \\ \cmidrule{2-5}
\sc higgs & 308 & 679 & 1358 & 1992 \\
\sc hepmass & 145 & 316 & 624 & 964 \\
\sc gassensor & 90 & 214 & 421 & 636 \\
\sc susy & 121 & 261 & 513 & 779 \\
\sc covtype & 9 & 18 & 39 & 54 \\
\sc cod-rna & 4 & 9 & 16 & 25 \\
\sc skin & 2 & 6 & 10 & 16 \\

\bottomrule
  \end{tabular}
\end{table}

\begin{table}[hbtp]
  \caption{Time to predict using random chunks divided by time to predict using spatial decomposition.
  Since RC becomes to expensive this is using 100\;000 training and test samples.
}
  \label{time-table-testFactor}
  \centering\footnotesize
  \begin{tabular}{lrrrr}
    \toprule
& 2000& 5000& 10000& 15000
\\
\midrule
\sc higgs & 74 & 34 & 15 & 10 \\
\sc hepmass & 84 & 28 & 14 & 6 \\
\sc gassensor & 667 & 513 & 254 & 414 \\
\sc susy & 88 & 35 & 19 & 11 \\
\sc covtype & 153 & 61 & 26 & 19 \\
\sc cod-rna & 139 & 52 & 18 & 6 \\
\sc skin & 25 & 10 & 8 & 4 \\

\bottomrule
  \end{tabular}
\end{table}



The results on real world data sets give a clear picture.
It can be seen in Figure~\ref{figure:size-error} that increasing the training set size
enhances the test error in most cases dramatically.
All data sets but \textsc{susy} show that by using spatial partitioning one is able to attack large-scale problems nicely.
Remark that all cell sizes give almost the same error.
Yet smaller cell sizes give it faster (see Table~\ref{time-table}), only for
\textsc{higgs} and \textsc{hepmass} bigger cells achieve the same test error
already using fewer training data.
Bigger cell sizes can give better test error,
although in the trade-off time vs.\ error, they play not too big of a role,
cf.\ Figure~\ref{figure:time-error} in the supplement.
In contrast, for the random chunks strategy the cell size plays the most crucial
role for the test error and the error quickly saturates at some value.
The error can only be made smaller by using bigger cell sizes,
and hence much more expensive training and even more testing time, see Table~\ref{time-table-testFactor}.
Finally, our spatial decomposition seems overwhelmed with \textsc{susy}
and exposes the same saturation effect as in random chunk and
does not achieve to beat that.


\begin{table*}[hbtp]
  \caption{More detailed times at fixed cell size 2000 (in seconds).
  The testing phase is given per 100\;000 test samples.
  The last column gives the maximal resident size over all of training
  and testing (in GB).}
  \label{time-fine-table}
  \centering\footnotesize
  \begin{tabular}{lrrrrrrr}
    \toprule
    name& \multicolumn{6}{c}{time (in seconds) used in phase} & RAM \\
	& part. & kernel calc.&solver &valid. &sel & test &
	\\
	\midrule
	\sc higgs & 40 & 2887 & 11862 & 994 & 1030 & 322 & 31 \\
\sc hepmass & 26 & 2022 &  4235 & 682 &  560 & 208 & 22 \\
\sc gassensor & 21 & 2354 &   729 & 633 &  541 & 166 & 21 \\
\sc susy & 18 & 1296 &  4419 & 448 &  374 & 141 & 13 \\
\sc covtype &  3 &  138 &   206 &  44 &   47 &  13 &  3 \\
\sc cod-rna &  1 &   69 &    76 &  20 &   21 &   7 &  1 \\
\sc skin &  1 &   53 &    21 &  18 &   11 &   3 &  1 \\

	\bottomrule
  \end{tabular}
\end{table*}

\subsection{Big-data: ECBDL 2014}
\label{subsec:ecbdl}

One of the most important aspects of spatial decompositions is
that this makes the process cloud scalable,
since the training and testing on the cells trivially can be assigned
to different workers, once the data is split.
To demonstrate this we used Apache Spark, an in-memory map/reduce framework.
The data set was saved on a Hadoop distributed file system on one master and up to 11 worker machines of the above type.
In a first step, the data was split into coarse cells by the following procedure.
A subset of the training data was sampled and sent to the master machine
where 1000-2000 centers were found and these centers were sent back to the worker machines.
Now each worker machine could assign locally to every of it's samples the coarse cell
in the Voronoi sense.
Finally a Spark-shuffle was performed: Every cell was assigned to one of the workers
and all its samples were sent to that worker.
This procedure is quite standard for Spark.
It had to be performed only once for the data set.

In the second step every such coarse cell--now being on one physical machine--was
used for training by our \texttt{C++} implementation discussed above:
this in particular means that each coarse cell was again split into fine cells
of some specified size and then the TV-VP-SVM method was used.
For these experiments we used a $20\times20$ hyper-parameter grid.
Obviously this now was done in parallel on all worker nodes.
The test set was also split into the coarse cells
and then by our implementation further into fine cells for prediction.

To give an experimental evaluation we used the classification data set introduced in
the \emph{Evolutionary Computation for Big Data and Big Learning Workshop}\footnote{
See \url{http://cruncher.ncl.ac.uk/bdcomp/} and
\url{http://cruncher.ncl.ac.uk/bdcomp/BDCOMP-final.pdf}.
} (ECBDL).
The data set considers contact points of polymers.
It has 631 dimensions and 32 million training (60GB disk space) and
2.9 million test samples (5.1GB disk space).
One of the major challenges with this is a rather strong imbalance in the labels--there are 98\% of negative samples.
Therefore the competition used a scoring of "$\text{TruePositiveRate}\cdot\text{TrueNegativeRate}$".

Certainly this data set is not ideally suited for local SVMs: 
on one hand it certainly has higher dimensions than we would hope.
On the other hand the imbalance in the data has to be treated by hand.
For this we used the hinge loss with weight 0.987
after trying out different weights on a validation sample of size 100\;000.
For the training calculations, the use of the full 128GB memory per socket
allowed us to use (fine) cell sizes up to $100\;000$.
The splitting was not optimized and took about an hour.
Training and testing took 32.2 hours on our 11 worker machines,
see Table~\ref{table-ecbdl-result}.

Our off-the-cuff scores range from 0.422 to 0.456.
That would have landed us in the middle of the scores at the beginning of the competition.
Of the seven teams three had their first submissions between 0.3 and 0.42
and at some point found a way to boost it to 0.45 or more.
One of these, the team \textsc{HyperEns}, used
\emph{standard and budgeted SVMs with bayesian optimisation of parameters}
and started with submissions scoring around
0.34--0.38 and after ten days found a way to score over 0.45 and achieving
in the end 0.489 using \emph{4.7 days of parameter optimisation in a 16-core machine}.
The three best teams scored from start over 0.47.
The winning team \textsc{Efdamis} used feature weighting and random forests
and its best model took
\emph{39h of Wall-clock time in a 144-core cluster (not used exclusively)}
which is comparable to the training time of our best model.

Generally, we suppose that in such competitions the best results are achieved by
careful hand-optimisation of the features and the hyper-parameters.
We on the other hand aspire to realize fully automatic learning and hence
are not aiming to beat such results.

\begin{table}[hbtp]
  \caption{Results for the ECBDL'14 data set. Increasing the cell size
  achieved to increase the score = TPR$\cdot$TNR.}
  \label{table-ecbdl-result}
  \centering\footnotesize
  \begin{tabular}{rrrrrr}
    \toprule
    cell size & time & Score & TPR & TNR & work nodes
	\\
	\midrule
10\;000 & 4.8h & 0.422 & 0.656 & 0.643 & 7 \\
15\;000 & 7.2h & 0.433 & 0.664 & 0.652 & 7 \\
20\;000 & 9.3h & 0.438 & 0.664 & 0.660 & 7 \\
50\;000 & 14.0h & 0.453 & 0.666 & 0.679 & 11 \\
100\;000 & 32.2h & 0.456 & 0.667 & 0.680 & 11 \\
	\bottomrule
  \end{tabular}
\end{table}

\section{Conclusion}

The experiments show that on commodity hardware SVMs can be used
to train with millions of samples achieving at least decent errors in
a few hours--even including 5-fold cross validation on a $10\times10$ hyper-parameter grid. 
The results for ECBDL demonstrate that local SVMs scale trivially across clusters.
Hence, cloud scaling of fully automatic state-of the art SVMs is possible.
Together with the statistical guarantees in Theorem~\ref{Thm hinge rate}
it is clear that local SVMs provide a reliable and broadly usable machine learning system for large scale data.


\ifnum\statePaper=1
\subsubsection*{Acknowledgments}

We thank the Institute of Mathematics at the University of Zurich
for the computational resources.
\fi

\clearpage


\bibliographystyle{plainnat}

{
\bibliography{bib/additional,bib/steinwart-books,bib/steinwart-proc,bib/steinwart-article,bib/steinwart-mine,bib/christmann}}

\begin{thebibliography}{26}
\providecommand{\natexlab}[1]{#1}
\providecommand{\url}[1]{\texttt{#1}}
\expandafter\ifx\csname urlstyle\endcsname\relax
  \providecommand{\doi}[1]{doi: #1}\else
  \providecommand{\doi}{doi: \begingroup \urlstyle{rm}\Url}\fi

\bibitem[Bennett and Blue(1998)]{BeBl98a}
K.~P. Bennett and J.~A. Blue.
\newblock A support vector machine approach to decision trees.
\newblock In \emph{Neural Networks Proceedings, 1998. IEEE World Congress on
  Computational Intelligence. The 1998 IEEE International Joint Conference on},
  volume~3, pages 2396--2401, 1998.

\bibitem[Bottou and Vapnik(1992)]{BoVa92a}
L.~Bottou and V.~Vapnik.
\newblock {Local Learning Algorithms}.
\newblock \emph{Neural Computation}, 4:\penalty0 888--900, 1992.

\bibitem[Bottou et~al.(2007)Bottou, Chapelle, DeCoste, and Weston]{BoChDCWe07}
L.~Bottou, O.~Chapelle, D.~DeCoste, and J.~Weston.
\newblock \emph{Large-Scale Kernel Machines}.
\newblock MIT Press, Cambridge, MA, 2007.

\bibitem[Carl and Stephani(1990)]{CaSt90}
B.~Carl and I.~Stephani.
\newblock \emph{Entropy, Compactness and the Approximation of Operators}.
\newblock Cambridge University Press, Cambridge, 1990.

\bibitem[Chang et~al.(2010)Chang, Guo, Lin, and Lu]{ChGuLiLu10a}
F.~Chang, C.-Y. Guo, X.-R. Lin, and C.-J. Lu.
\newblock {Tree Decomposition for Large-Scale SVM Problems}.
\newblock \emph{J. Mach. Learn. Res.}, 11:\penalty0 2935--2972, 2010.

\bibitem[Cheng et~al.(2007)Cheng, Tan, and Jin]{ChTaJi07a}
H.~Cheng, P.-N. Tan, and R.~Jin.
\newblock {Localized Support Vector Machine and Its Efficient Algorithm}.
\newblock In \emph{SIAM International Conference on Data Mining}, 2007.

\bibitem[Cheng et~al.(2010)Cheng, Tan, and Jin]{ChTaJi10a}
H.~Cheng, P.-N. Tan, and R.~Jin.
\newblock {Efficient Algorithm for Localized Support Vector Machine}.
\newblock \emph{IEEE Transactions on Knowledge and Data Engineering},
  22:\penalty0 537--549, 2010.

\bibitem[Eberts and Steinwart(2011)]{EbSt11a}
M.~Eberts and I.~Steinwart.
\newblock Optimal learning rates for least squares {SVM}s using {G}aussian
  kernels.
\newblock In J.~Shawe-Taylor, R.S. Zemel, P.~Bartlett, F.C.N. Pereira, and K.Q.
  Weinberger, editors, \emph{Advances in Neural Information Processing Systems
  24}, pages 1539--1547, 2011.

\bibitem[Eberts and Steinwart(2015)]{EbStXXa}
M.~Eberts and I.~Steinwart.
\newblock Optimal learning rates for localized {SVM}s.
\newblock Technical report, \url{http://arxiv.org/pdf/1507.06615v1}, 2015.

\bibitem[Hable(2013)]{Hable13a}
R.~Hable.
\newblock Universal consistency of localized versions of regularized kernel
  methods.
\newblock \emph{J. Mach. Learn. Res.}, 14:\penalty0 153--186, 2013.

\bibitem[Lin et~al.(2015)Lin, Rosasco, and Zhou]{LiRoZh15a}
J.~Lin, L.~Rosasco, and D.-X. Zhou.
\newblock Iterative regularization for learning with convex loss functions.
\newblock \emph{arXiv:1503.08985}, 2015.

\bibitem[Meister and Steinwart(2016)]{MeSt16a}
M.~Meister and I.~Steinwart.
\newblock Optimal learning rates for localized {SVM}s.
\newblock \emph{J. Mach. Learn. Res.}, 17:\penalty0 1--44, 2016.

\bibitem[Platt(1999)]{Platt99a}
J.~Platt.
\newblock Fast training of support vector machines using sequential minimal
  optimization.
\newblock In \emph{Advances in Kernel Methods--Support Vector Learning}, pages
  185--208. MIT Press, Cambridge, MA, 1999.

\bibitem[Rahimi and Recht(2008)]{RaRe08a}
A.~Rahimi and B.~Recht.
\newblock Random features for large-scale kernel machines.
\newblock In J.C. Platt, D.~Koller, Y.~Singer, and S.T. Roweis, editors,
  \emph{Advances in Neural Information Processing Systems 20}, pages
  1177--1184. 2008.

\bibitem[Rosasco and Villa(2015)]{RoVi15a-pub}
L.~Rosasco and S.~Villa.
\newblock Learning with incremental iterative regularization.
\newblock In C.~Cortes, N.D. Lawrence, D.D. Lee, M.~Sugiyama, and R.~Garnett,
  editors, \emph{Advances in Neural Information Processing Systems 28}, pages
  1621--1629. 2015.

\bibitem[Sriperumbudur and Szabo(2015)]{SrSz15a}
B.~Sriperumbudur and Z.~Szabo.
\newblock Optimal rates for random fourier features.
\newblock In C.~Cortes, N.D. Lawrence, D.D. Lee, M.~Sugiyama, and R.~Garnett,
  editors, \emph{Advances in Neural Information Processing Systems 28}, pages
  1144--1152. 2015.

\bibitem[Steinwart and Christmann(2008)]{StCh08}
I.~Steinwart and A.~Christmann.
\newblock \emph{Support Vector Machines}.
\newblock Springer, New York, 2008.

\bibitem[Steinwart and Scovel(2007)]{StSc07a}
I.~Steinwart and C.~Scovel.
\newblock Fast rates for support vector machines using {G}aussian kernels.
\newblock \emph{Ann. Statist.}, 35:\penalty0 575--607, 2007.

\bibitem[Steinwart and Thomann(2017)]{liquidSVM}
I.~Steinwart and P.~Thomann.
\newblock {liquidSVM}: A fast and versatile {SVM} package.
\newblock \emph{{ArXiv e-prints 1702.06899}}, feb 2017.
\newblock URL \url{http://www.isa.uni-stuttgart.de/software}.

\bibitem[Steinwart et~al.(2011)Steinwart, Hush, and Scovel]{StHuSc11a}
I.~Steinwart, D.~Hush, and C.~Scovel.
\newblock Training {SVM}s without offset.
\newblock \emph{J. Mach. Learn. Res.}, 12:\penalty0 141--202, 2011.

\bibitem[Tsybakov(2004)]{Tsybakov04a}
A.~B. Tsybakov.
\newblock Optimal aggregation of classifiers in statistical learning.
\newblock \emph{Ann. Statist.}, 32:\penalty0 135--166, 2004.

\bibitem[Vapnik and Bottou(1993)]{VaBo93a}
V.~Vapnik and L.~Bottou.
\newblock {Local Algorithms for Pattern Recognition and Dependencies
  Estimation}.
\newblock \emph{Neural Computation}, 5:\penalty0 893--909, 1993.

\bibitem[Williams and Seeger(2001)]{WiSe01a}
C~K.~I. Williams and M.~Seeger.
\newblock Using the {N}ystr\"{o}m method to speed up kernel machines.
\newblock In T.K. Leen, T.G. Dietterich, and V.~Tresp, editors, \emph{Advances
  in Neural Information Processing Systems 13}, pages 682--688. MIT Press,
  2001.

\bibitem[Wu et~al.(1999)Wu, Bennett, Cristianini, and
  Shawe-Taylor]{WuBeCrST99a}
D.~Wu, K.~P. Bennett, N.~Cristianini, and J.~Shawe-Taylor.
\newblock {Large Margin Trees for Induction and Transduction}.
\newblock In \emph{Proceedings of the Sixteenth International Conference on
  Machine Learning}, ICML '99, pages 474--483, San Francisco, CA, USA, 1999.
  Morgan Kaufmann Publishers Inc.

\bibitem[Zhang et~al.(2006)Zhang, Berg, Maire, and Malik]{ZhBeMaMa06a}
H.~Zhang, A.~C. Berg, M.~Maire, and J.~Malik.
\newblock {SVM-KNN: Discriminative Nearest Neighbor Classification for Visual
  Category Recognition}.
\newblock In \emph{IEEE Computer Society Conference on Computer Vision and
  Pattern Recognition}, volume~2, pages 2126--2136, 2006.

\bibitem[Zhang et~al.(2015)Zhang, Duchi, and Wainwright]{ZhDuWa15a}
Y.~Zhang, J.~Duchi, and M.~Wainwright.
\newblock {Divide and Conquer Kernel Ridge Regression: A Distributed Algorithm
  with Minimax Optimal Rates}.
\newblock \emph{J. Mach. Learn. Res.}, 16:\penalty0 3299--3340, 2015.

\end{thebibliography}


\begin{thebibliography}{10}

\bibitem{BeBl98a}
K.~P. Bennett and J.~A. Blue.
\newblock A support vector machine approach to decision trees.
\newblock In {\em Neural Networks Proceedings, 1998. IEEE World Congress on
  Computational Intelligence. The 1998 IEEE International Joint Conference on},
  volume~3, pages 2396--2401, 1998.

\bibitem{BlBr07b}
E.~Blanzieri and A.~Bryl.
\newblock {Evaluation of the Highest Probability SVM Nearest Neighbor
  Classifier with Variable Relative Error Cost}.
\newblock In {\em In Proceedings of Fourth Conference on Email and Anti-Spam,
  CEAS{'}2007}, 2007.

\bibitem{BlBr07a}
E.~Blanzieri and A.~Bryl.
\newblock {Instance-based spam filtering using SVM nearest neighbor
  classifier}.
\newblock In {\em In Proceedings of FLAIRS 2007}, pages 441--442, 2007.

\bibitem{BlMe08a}
E.~Blanzieri and F.~Melgani.
\newblock {Nearest Neighbor Classification of Remote Sensing Images With the
  Maximal Margin Principle}.
\newblock {\em IEEE Trans. Geoscience Remote Sensing}, 46:1804--1811, 2008.

\bibitem{CaSt90}
B.~Carl and I.~Stephani.
\newblock {\em Entropy, Compactness and the Approximation of Operators}.
\newblock Cambridge University Press, Cambridge, 1990.

\bibitem{Carpenter09a}
A.~Carpenter.
\newblock {cuSVM}: a {CUDA} implementation of support vector classification and
  regression.
\newblock 2009.

\bibitem{CaSuKe08a}
B.~Catanzaro, N.~Sundaram, and K.~Keutzer.
\newblock Fast support vector machine training and classification on graphics
  processors.
\newblock In {\em Proceedings of the 25th international conference on Machine
  learning}, pages 104--111. ACM, 2008.

\bibitem{ChGuLiLu10a}
F.~Chang, C.-Y. Guo, X.-R. Lin, and C.-J. Lu.
\newblock {Tree Decomposition for Large-Scale SVM Problems}.
\newblock {\em J. Mach. Learn. Res.}, 11:2935--2972, 2010.

\bibitem{ChawlaHallBowyerKegelmeyer2004}
N.~V. Chawla, L.~O. Hall, K.~W. Bowyer, and W.~P. Kegelmeyer.
\newblock Learning ensembles for bites: a scalable and accurate approach.
\newblock {\em J. Mach. Learn. Res.}, 5:421--451, 2004.

\bibitem{ChawlaEschrichHall2000}
N.V. Chawla, S.~Eschrich, and L.O. Hall.
\newblock Creating ensembles of classifiers.
\newblock {\em In: First IEEE International Conference on Data Mining}, pages
  581--583, 2000.

\bibitem{ChTaJi07a}
H.~Cheng, P.-N. Tan, and R.~Jin.
\newblock {Localized Support Vector Machine and Its Efficient Algorithm}.
\newblock In {\em SIAM International Conference on Data Mining}, 2007.

\bibitem{ChTaJi10a}
H.~Cheng, P.-N. Tan, and R.~Jin.
\newblock {Efficient Algorithm for Localized Support Vector Machine}.
\newblock {\em IEEE Transactions on Knowledge and Data Engineering},
  22:537--549, 2010.

\bibitem{Christmann2005b}
A.~Christmann.
\newblock On a {S}trategy to {D}evelop {R}obust and {S}imple {T}ariffs from
  {M}otor {V}ehicle {I}nsurance {D}ata.
\newblock {\em Acta Math. Appl. Sinica (English Ser.)}, 21:193--208, 2005.

\bibitem{ChristmannSteinwartHubert2007}
A.~Christmann, I.~Steinwart, and M.~Hubert.
\newblock Robust learning from bites for data mining.
\newblock {\em Comput. Statist. Data Anal.}, 52:347--361, 2007.

\bibitem{ClaesenEtAl2014}
M.~Claesen, F.~{De Smet}, J.A.K. Suykens, and B.~{De Moor}.
\newblock {EnsembleSVM}: A library for ensemble learning using support vector
  machines.
\newblock {\em J. Mach. Learn. Res.}, 15:141--145, 2014.

\bibitem{CoBeBe02a}
R.~Collobert, S.~Bengio, and Y.~Bengio.
\newblock A parallel mixture of {SVM}s for very large scale problems.
\newblock {\em Neural computation}, 14:1105--1114, 2002.

\bibitem{CoSrKe11a}
A.~Cotter, N.~Srebro, and J.~Keshet.
\newblock A {GPU}-tailored approach for training kernelized {SVM}s.
\newblock In {\em Proceedings of the 17th ACM SIGKDD international conference
  on Knowledge discovery and data mining}, pages 805--813. ACM, 2011.

\bibitem{DrMa05a}
P.~Drineas and M.~W. Mahoney.
\newblock On the {N}ystr\"om method for approximating a {G}ram matrix for
  improved kernel-based learning.
\newblock {\em J. Mach. Learn. Res.}, 6:2153--2175, 2005.

\bibitem{DuCoGr07a}
I.~Durdanovic, E.~Cosatto, and H.-P. Graf.
\newblock Large-scale parallel {SVM} implementation.
\newblock In L.~Bottou, O.~Chapelle, D.~DeCoste, and J.~Weston, editors, {\em
  Large-scale Kernel Machines}, pages 105--138. MIT Press, Cambridge, MA, 2007.

\bibitem{DuCoGrCaJaChMa12a}
I.~Durdanovic, E.~Cosatto, H.~P. Graf, S.~Cadambi, V.~Jakkula, S.~T.
  Chakradhar, and A.~Majumdar.
\newblock Massive {SVM} parallelization using hardware accelerators.
\newblock In R.~Bekkerman, M.~Bilenko, and J.~Langford, editors, {\em Scaling
  Up Machine Learning}, pages 127--147. Cambridge University Press, Cambridge,
  2012.

\bibitem{EbStXXa}
M.~Eberts and I.~Steinwart.
\newblock Optimal learning rates for localized {SVM}s.
\newblock Technical report, \url{http://arxiv.org/pdf/1507.06615v1}, 2015.

\bibitem{FaChHsWaLi08a}
R.-E. Fan, K.-W. Chang, C.-J. Hsieh, X.-R. Wang, and C.-J. Lin.
\newblock {LIBLINEAR}: A library for large linear classification.
\newblock {\em J. Mach. Learn. Res.}, 9:1871--1874, 2008.

\bibitem{GlIg06a}
T.~Glasmachers and C.~Igel.
\newblock Maximum-gain working set selection for {SVM}s.
\newblock {\em J. Mach. Learn. Res.}, 7:1437--1466, 2006.

\bibitem{GrCoBoVa04a}
H.~P. Graf, E.~Cosatto, L.~Bottou, I.~Dourdanovic, and V.~Vapnik.
\newblock Parallel support vector machines: The cascade {SVM}.
\newblock In {\em Advances in Neural Information Processing Systems}, pages
  521--528, 2004.

\bibitem{Hable13a}
R.~Hable.
\newblock Universal consistency of localized versions of regularized kernel
  methods.
\newblock {\em J. Mach. Learn. Res.}, 14:153--186, 2013.

\bibitem{HsLi02a}
C.-W. Hsu and C.-J. Lin.
\newblock A simple decomposition method for support vector machines.
\newblock {\em Mach. Learn.}, 46:291--314, 2002.

\bibitem{HuAvSaSiRa14a}
P.-S. Huang, H.~Avron, T.~N. Sainath, V.~Sindhwani, and B.~Ramabhadran.
\newblock Kernel methods match deep neural networks on timit.
\newblock In {\em Acoustics, Speech and Signal Processing (ICASSP), 2014 IEEE
  International Conference on}, pages 205--209. IEEE, 2014.

\bibitem{KuMoTa12a}
S.~Kumar, M.~Mohri, and A.~Talwalkar.
\newblock Sampling methods for the {N}ystr\"om method.
\newblock {\em J. Mach. Learn. Res.}, 13:981--1006, 2012.

\bibitem{LiRoZh15a}
J.~Lin, L.~Rosasco, and D.-X. Zhou.
\newblock Iterative regularization for learning with convex loss functions.
\newblock {\em arXiv:1503.08985}, 2015.

\bibitem{LuMaLiGaGuBeFaCoKiPiShXXa}
Z.~Lu, A.~May, K.~Liu, D.~Guo, A.~Bellet, L.~Fan, M.~Collins, B.~Kingsbury,
  M.~Picheny, and F.~Sha.
\newblock How to scale up kernel methods to be as good as deep neural nets.
\newblock \url{http://arxiv.org/pdf/1411.4000v2}, 2014.

\bibitem{MaCaBeChGr12a}
A.~Majumdar, S.~Cadambi, M.~Becchi, S.~T. Chakradhar, and H.-P. Graf.
\newblock A massively parallel, energy efficient programmable accelerator for
  learning and classification.
\newblock {\em ACM Trans. Archit. Code Optim.}, 9:6:1--6:30, 2012.

\bibitem{MeBiWe14a}
O.~Meyer, B.~Bischl, and C.~Weihs.
\newblock Support vector machines on large data sets: Simple parallel
  approaches.
\newblock In {\em Data Analysis, Machine Learning and Knowledge Discovery},
  pages 87--95. Springer, 2014.

\bibitem{NiHuHu14a}
F.~Nie, Y.~Huang, X.~Wang, and H.~Huang.
\newblock Linear time solver for primal svm.
\newblock In {\em Proceedings of the 31st International Conference on Machine
  Learning (ICML-14)}, pages 505--513, 2014.

\bibitem{Platt99a}
J.~Platt.
\newblock Fast training of support vector machines using sequential minimal
  optimization.
\newblock In {\em Advances in Kernel Methods--Support Vector Learning}, pages
  185--208. MIT Press, Cambridge, MA, 1999.

\bibitem{RaRe08a}
A.~Rahimi and B.~Recht.
\newblock Random features for large-scale kernel machines.
\newblock In J.C. Platt, D.~Koller, Y.~Singer, and S.T. Roweis, editors, {\em
  Advances in Neural Information Processing Systems 20}, pages 1177--1184.
  2008.

\bibitem{RaDu07a}
V.~C. Raykar and R.~Duraiswami.
\newblock The improved fast {G}auss transform with applications to machine
  learning.
\newblock In L.~Bottou, O.~Chapelle, D.~DeCoste, and J.~Weston, editors, {\em
  Large-Scale Kernel Machines}, pages 175--202. MIT Press, Cambridge, MA, 2007.

\bibitem{RoVi15a}
L.~Rosasco and S.~Villa.
\newblock Learning with incremental iterative regularization.
\newblock \url{http://arxiv.org/pdf/1405.0042v2}, 2015.

\bibitem{RudiCamorianoRosasco2015}
A.~Rudi, R.~Camoriano, and L.~Rosasco.
\newblock Less is {M}ore: {N}ystr{\"o}m {C}omputational {R}egularization.
\newblock {\em Adv. Comput. Math.}, 2015.

\bibitem{SeBl08a}
N.~Segata and E.~Blanzieri.
\newblock {Empirical assessment of classification accuracy of local SVM}.
\newblock Technical report, University of Trento, Information Engineering and
  Computer Science, 2008.

\bibitem{SeBl10a}
N.~Segata and E.~Blanzieri.
\newblock {Fast and Scalable Local Kernel Machines}.
\newblock {\em J. Mach. Learn. Res.}, 11:1883--1926, 2010.

\bibitem{SSSiSrCo11a}
S.~Shalev-Shwartz, Y.~Singer, N.~Srebro, and A.~Cotter.
\newblock Pegasos: Primal estimated sub-gradient solver for {SVM}.
\newblock {\em Math. Program.}, 127:3--30, 2011.

\bibitem{StCh08}
I.~Steinwart and A.~Christmann.
\newblock {\em Support Vector Machines}.
\newblock Springer, New York, 2008.

\bibitem{StHuSc11a}
I.~Steinwart, D.~Hush, and C.~Scovel.
\newblock Training {SVM}s without offset.
\newblock {\em J. Mach. Learn. Res.}, 12:141--202, 2011.

\bibitem{SuZhZh15a}
S.~Sun, J.~Zhao, and J.~Zhu.
\newblock A review of {N}ystr{\"o}m methods for large-scale machine learning.
\newblock {\em Information Fusion}, 26:36--48, 2015.

\bibitem{TyGaWeAgTrXXa}
S.~Tyree, J.~R. Gardner, K.~Q. Weinberger, K.~Agrawal, and J.~Tran.
\newblock Parallel support vector machines in practice.
\newblock \url{http://arxiv.org/pdf/1404.1066v1}, 2014.

\bibitem{WiSe01a}
C~K.~I. Williams and M.~Seeger.
\newblock Using the {N}ystr\"{o}m method to speed up kernel machines.
\newblock In T.K. Leen, T.G. Dietterich, and V.~Tresp, editors, {\em Advances
  in Neural Information Processing Systems 13}, pages 682--688. MIT Press,
  2001.

\bibitem{WuBeCrST99a}
D.~Wu, K.~P. Bennett, N.~Cristianini, and J.~Shawe-Taylor.
\newblock {Large Margin Trees for Induction and Transduction}.
\newblock In {\em Proceedings of the Sixteenth International Conference on
  Machine Learning}, ICML '99, pages 474--483, San Francisco, CA, USA, 1999.
  Morgan Kaufmann Publishers Inc.

\bibitem{YaDuGuDa03a}
C.~Yang, R.~Duraiswami, N.~A. Gumerov, and L.~Davis.
\newblock Improved fast {G}auss transform and efficient kernel density
  estimation.
\newblock In {\em Proceedings of the 9th IEEE International Conference on
  Computer Vision}, pages 664--671, 2003.

\bibitem{ZhBeMaMa06a}
H.~Zhang, A.~C. Berg, M.~Maire, and J.~Malik.
\newblock {SVM-KNN: Discriminative Nearest Neighbor Classification for Visual
  Category Recognition}.
\newblock In {\em IEEE Computer Society Conference on Computer Vision and
  Pattern Recognition}, volume~2, pages 2126--2136, 2006.

\bibitem{ZhWaBaLiQiCuCh08a}
K.~Zhu, H.~Wang, H.~Bai, J.~Li, Z.~Qiu, H.~Cui, and E.~Y. Chang.
\newblock Parallelizing support vector machines on distributed computers.
\newblock In J.~C. Platt, D.~Koller, Y.~Singer, and S.~T. Roweis, editors, {\em
  Advances in Neural Information Processing Systems 20}, pages 257--264. 2008.

\end{thebibliography}

\iftrue 

\clearpage
\appendix
\onecolumn
\section{Proofs}
\allowdisplaybreaks

A key concept to derive oracle inequalities and learning rates, which is used in the proof of Theorem \ref{corollar}, is the concept of entropy numbers, see \citet{CaSt90} or 
\citet[Definition~A.5.26]{StCh08}. Recall that, for normed spaces $(E,\|\,\cdot\,\|_E)$ and $(F,\|\,\cdot\,\|_F)$ as well as 
an integer $i\geq 1$, the $i$-th (dyadic) entropy number of a bounded, linear operator 
$S : E\to F$ is defined by
\begin{align*}
 e_i(S : E\to F) & := e_i(SB_E,\|\,\cdot\,\|_F) \\
 & := \inf\Biggl\{\e>0:\exists s_1,\ldots,s_{2^{i-1}}\in SB_E 
\text{ such that } SB_E \subset \bigcup_{j=1}^{2^{i-1}}(s_j + \e B_F)\Biggr\}\,,
\end{align*}
where we use the convention $\inf\emptyset:=\infty$, and $B_E$ as well as $B_F$ denote 
the closed unit balls in $E$ and $F$, respectively. 

\begin{proof}[Proof of Theorem \ref{corollar}]
We denote by $\tilde{H}$ the RKHS over $X$ with Gaussian kernel of width $\g_{\text{max}}$. Let $f_0 \in \tilde{H}$. Then, we can w.l.o.g.\ assume that $\|f_0\|_{\infty} \leq 1$, since the Gaussian kernel is bounded. For every $j\in \{1,\ldots m\}$ we define $f_j=\eins_{A_j}f_0=\widehat{f_{0_{|A_j}}}$ and remark that $f_{j_{|A_j}} \in H_{\g_{\text{max}}}(A_j)$ due to \citet[Exercise~4.4i)]{StCh08}. Hence, $f_j \in \hat{H}_{\g_{\text{max}}}$ by definition of $\hat{H}_{\g_{\text{max}}}$. Furthermore, since $\g_j \leq \g_{\text{max}}$ for every $j\in \{1,\ldots m\}$, \citet[Proposition~4.4.6]{StCh08} shows that $\hat{H}_{\g_{\text{max}}} \subset  \hat{H}_{\g_j}$ with
\begin{align}\label{norm_Hs}
\|f_j \|_{\hat{H}_{\g_j}} \leq \left( \frac{\g_{\text{max}}}{\g_j} \right)^{d/2} \|f_j \|_{\hat{H}_{\g_{\text{max}}}}.
\end{align}
Hence, we find that $f_j \in \hat{H}_{\g_{j}}$ for every $j \in \lbrace{1,\ldots,m \rbrace}$. Since
\begin{align*}
f_0=\sum_{i=1}^m f_j
\end{align*}
we conclude that $f_0 \in H$ by definition of $H$. Next, we observe with (\ref{norm_Hs}) that\begin{align*}
\sum_{j=1}^m \lb_j  \|\eins_{A_j}f_0\|^2_{\hat{H}_{\g_j}}= \sum_{j=1}^m \lb_j \|f_j\|^2_{\hat{H}_{\g_j}} \leq  \sum_{j=1}^m  \lb_j \left( \frac{\g_{\text{max}}}{\g_j} \right)^d \|f_j\|^2_{\tilde{H}} 
\leq \sum_{j=1}^m \lb_j \left( \frac{\g_{\text{max}}}{\g_j} \right)^d \| f_0\|^2_{\tilde{H}}.
\end{align*}
By using the latter inequality and the bound for the approximation error given in \citet[Theorem~8.18]{StCh08} with tail exponent $\t=\infty$ since $X$ is compact and with $\lb=\sum_{j=1}^m \lb_j \left( \frac{\g_{\text{max}}}{\g_j} \right)^d $, we find that
\begin{align}\label{approx_error}
\begin{split}
\sum_{j=1}^m \lb_j  \|\eins_{A_j}f_0\|^2_{\hat{H}_{\g_j}} + \RLP(f_0) - \RB &\Leq \sum_{j=1}^m \lb_j \left( \frac{\g_{\text{max}}}{\g_j} \right)^d \| f_0\|^2_{\tilde{H}}  + \RLP(f_0)-\RB \\
								     &\Leq \max \lbrace c_d,\tilde{c}_{d,\b}c_{\text{NE}} \rbrace \left( \sum_{j=1}^m \lb_j \left( \frac{\g_{\text{max}}}{\g_j} \right)^d \g_{\text{max}}^{-d}+ \g_{\text{max}}^{\b} \right)\\
								     &\Leq \hat{c}  \left( \sum_{j=1}^m \lb_j \g_j^{-d}+ \g_{\text{max}}^{\b} \right),
\end{split}
\end{align}
where $\hat{c}:= \max \lbrace c_d,\tilde{c}_{d,\b}c_{\text{NE}} \rbrace$ with $c_d, \tilde{c}_{d,\b}>0$. Next, \citet[Theorem 6]{EbStXXa} provides the bound $e_i(\mathrm{id} : H_{\g}(A_{j}) \to L_2(\TrP{A_j})) \leq  a_j i^{-\frac{1}{2p}}$ 
for $i\geq 1$ with $a_j = \tilde{c}_p \sqrt{P_X(A_{j})}\, r^\frac{d+2p}{2p} \g_j^{-\frac{d+2p}{2p}}$, where $\tilde{c}_p$ is a positive constant depending from $p$.
For the constant $a$ from Theorem \ref{main thm.} this yields
\begin{align*}
 & \Biggl(\max\Biggl\{ c_p m^{\frac{1}{2}} \Biggl(\sum_{j=1}^m \lb_j^{-p} a_j^{2p}\Biggr)^{\frac{1}{2p}},2\Biggr\}\Biggr)^{2p} \\
 & = \Biggl(\max\Biggl\{ c_p m^{\frac{1}{2}} \Biggl(\sum_{j=1}^m \lb_j^{-p} \left( \tilde{c}_p \sqrt{P_X(A_{j})}\, r^\frac{d+2p}{2p} \g_j^{-\frac{d+2p}{2p}} \right)^{2p}\Biggr)^{\frac{1}{2p}},2\Biggr\}\Biggr)^{2p} \\
 & =\Biggl(\max\Biggl\{ c_p \tilde{c}_p m^{\frac{1}{2}} r^\frac{d+2p}{2p} \Biggl(\sum_{j=1}^m \left( \lb_j^{-1}  \g_j^{-\frac{d+2p}{p}} P_X(A_{j})\right)^{p} \Biggr)^{\frac{1}{2p}},2\Biggr\}\Biggr)^{2p} \\
 &\leq \Biggl(\max\Biggl\{c_p \tilde{c}_p m^{\frac{1}{2p}} r^\frac{d+2p}{2p} \Biggl(\sum_{j=1}^m  \lb_j^{-1}  \g_j^{-\frac{d+2p}{p}} P_X(A_{j}) \Biggr)^{\frac{1}{2}},2\Biggr\}\Biggr)^{2p}\\
 & \leq \Biggl(\max\Biggl\{c_p \tilde{c}_p 16^{\frac{d}{2p}} r   \Biggl(\sum_{j=1}^m  \lb_j^{-1}  \g_j^{-\frac{d+2p}{p}} P_X(A_{j}) \Biggr)^{\frac{1}{2}},2\Biggr\}\Biggr)^{2p}\\
 &\leq C_p r^{2p} \Biggl(\sum_{j=1}^m  \lb_j^{-1}  \g_j^{-\frac{d+2p}{p}} P_X(A_{j}) \Biggr)^p + 4^{p}  \\
 & =: a^{2p}\,,
\end{align*}
where we used that $\| \cdot \|_p \leq m^{\frac{1-p}{p}} \| \cdot \|_1$ for $0<p<1$, as well as $m r^d\leq 16^d$ by \eqref{ex. Ueberdeckung} and that $C_p := c_p^{2p} \tilde{c}_p^{2p}16^d$.
Then, by using Theorem \ref{main thm.}, (\ref{approx_error}), the concavity of the function $t\mapsto t^{\frac{q+1}{q+2-p}}$ for $t\geq 0$ and the fact that $\tau \geq 1$ with $\t \leq n$ we obtain that
\begin{align*}
&\sum_{j=1}^m \lb_j \|f_{\D_{j},\lb_j,\g_j}\|^2_{\hat{H}_j}+\RLP(\fcl_{\D,\bs\lb,\bs\g})-\RB\\ 
  &\Leq  9 \left(\sum_{j=1}^m \lb_j  \|\eins_{A_j}f_0\|^2_{\hat{H}_{\g_j}}
  +\RLP(f_0) -\RB\right) 
  + C \left(a^{2p}n^{-1}\right)^{\frac{q+1}{q+2-p}}  
  + 3 \left(\frac{432 c_{\text{NE}}^{\frac{q}{q+1}} \t}{n}\right)^{\frac{q+1}{q+2}}  + \frac{30\t}{n}\\
  &\Leq 9\hat{c}  \left( \sum_{j=1}^m \lb_j \g_j^{-d}+ \g_{\text{max}}^{\b} \right) 
  + C \left(C_p r^{2p} \left(\sum_{j=1}^m  \lb_j^{-1}  \g_j^{-\frac{d+2p}{p}} P_X(A_{j}) \right)^pn^{-1} + 4^{p}n^{-1}\right)^{\frac{q+1}{q+2-p}} \\
  &\quad +  3 \left(\frac{432 c_{\text{NE}}^{\frac{q}{q+1}} \t}{n}\right)^{\frac{q+1}{q+2}} \! + \frac{30\t}{n}\\
  &\Leq  9\hat{c}  \left( \sum_{j=1}^m \lb_j \g_j^{-d}+ \g_{\text{max}}^{\b} \right) 
  + C \left(C_p r^{2p} \left(\sum_{j=1}^m  \lb_j^{-1}  \g_j^{-\frac{d+2p}{p}} P_X(A_{j}) \right)^pn^{-1} \right)^{\frac{q+1}{q+2-p}}\! \\ 
  &\quad + C\left(\frac{4^p\tau}{n} \right)^{\frac{q+1}{q+2-p}} + 3 \left(\frac{432 c_{\text{NE}}^{\frac{q}{q+1}} \t}{n}\right)^{\frac{q+1}{q+2}}  + \frac{30\t}{n}\\
  &\Leq \tilde{C}_{\b,d,p,q} \Biggl(   \sum_{j=1}^m \lb_j \g_j^{-d}+ \g_{\text{max}}^{\b}  +\left( r^{2p} \left(\sum_{j=1}^m  \lb_j^{-1}  \g_j^{-\frac{d+2p}{p}} P_X(A_{j}) \right)^p n^{-1} \right)^{\frac{q+1}{q+2-p}} \\
  &\quad + \left(\frac{\tau}{n} \right)^{\frac{q+1}{q+2-p}} +  \left(\frac{\t}{n}\right)^{\frac{q+1}{q+2}}  + \frac{\t}{n}  \Biggr)\\
  &\Leq C_{\b,d,p,q}  \Biggl(   \sum_{j=1}^m \lb_j \g_j^{-d}+ \g_{\text{max}}^{\b}   +\left( r^{2p} \left(\sum_{j=1}^m  \lb_j^{-1}  \g_j^{-\frac{d+2p}{p}} P_X(A_{j}) \right)^p n^{-1} \right)^{\frac{q+1}{q+2-p}} +\left(\frac{\t}{n}\right)^{\frac{q+1}{q+2}} \Biggr)\\
\end{align*}
holds with probability $P^n$ not less than $1-3e^{-\t}$, where the constants $\tilde{C}_{\b,d,p,q}$ and $C_{\b,d,p,q}$ are given by $\tilde{C}_{\b,d,p,q} :=\max \lbrace 9\hat{c}, C \cdot C_p^{\frac{q+1}{q+2-p}}, C \cdot 4^{\frac{p(q+1)}{q+2-p}},3 c_{\text{NE}}^{\frac{q}{q+2}}\cdot 432^{\frac{q+1}{q+2}},30  \rbrace$ and $C_{\b,d,p,q} :=\max \lbrace 9\hat{c}, C \cdot C_p^{\frac{q+1}{q+2-p}}, 3 \cdot C \cdot 4^{\frac{p(q+1)}{q+2-p}},9 c_{\text{NE}}^{\frac{q}{q+2}}\cdot 432^{\frac{q+1}{q+2}},90  \rbrace$.
\end{proof}

\begin{proof}[Proof of Theorem \ref{Thm hinge rate}]
First we simplify the presentation by using the sequences $\tilde{\lb}_{n}:=c_2n^{-(\b+d)\k}$ and $\tilde{\g}_{n}:=c_3n^{-\k}$ with $\k:=\frac{(q+1)}{\b(q+2)+d(q+1)}$. Then, we find with Theorem \ref{corollar} together with $r_n = c_1 n^{-\nu}, \lb_{n,j} = r_n^d  \tilde{\lb}_{n}$ and $\g_{n,j} = \tilde{\g}_{n}$ and with $\sum_{j=1}^{m_n} P_X(A_{j})=1$ and $m_n \leq 16^d r_n^{-d}$ that
\begin{align*}
&\RLP(\fcl_{\D,\bs\lb,\bs\g})-\RB \nonumber\\ 
&\Leq C_{\b,d,p,q}  \Biggl(  \sum_{j=1}^{m_n} \lb_{n,j} \g_{n,j}^{-d}+ \g_{\text{max}}^{\b} +\left( r_n^{2p} \left(\sum_{j=1}^{m_n}  \lb_{n,j}^{-1}  \g_{n,j}^{-\frac{d+2p}{p}} P_X(A_{j}) \right)^p n^{-1} \right)^{\frac{q+1}{q+2-p}} + \left(\frac{\t}{n}\right)^{\frac{q+1}{q+2}}  \Biggr)\\
&\,=\, C_{\b,d,p,q}  \Biggl(  m_n r_n^d \tilde{\lb}_{n} \tilde{\g}_{n}^{-d}+ \tilde{\g}_{n}^{\b}  +\left( r_n^{2p} \left(r_n^{-d} \tilde{\lb}_{n}^{-1}  \tilde{\g}_{n}^{-\frac{d+2p}{p}}  \sum_{j=1}^{m_n}  P_X(A_{j}) \right)^p n^{-1} \right)^{\frac{q+1}{q+2-p}} + \left(\frac{\t}{n}\right)^{\frac{q+1}{q+2}}  \Biggr)\\
&\Leq 16^d C_{\b,d,p,q}  \Biggl(  \tilde{\lb}_{n} \tilde{\g}_{n}^{-d}+ \tilde{\g}_{n}^{\b}  +\left( r_n^{p(2-d)}  \tilde{\lb}_{n}^{-p}  \tilde{\g}_{n}^{-(d+2p)} n^{-1} \right)^{\frac{q+1}{q+2-p}} + \left(\frac{\t}{n}\right)^{\frac{q+1}{q+2}}  \Biggr)\\
&\Leq C_{\b,\nu,d,p,q} \Bigg( n^{-(\b+d)\k} n^{+d\k} +  n^{-\b\k} + \left(\frac{n^{-p\nu(2-d)}}{n^{-p(\b+d)\k} n^{-(d+2p)\k+1}}\right)^{\frac{q+1}{q+2-p}} +  \left( \frac{\t}{n}\right)^{\frac{q+1}{q+2}} \Bigg)\\
 &\,=\, C_{\b,\nu,d,p,q}\Bigg( 2n^{-\b\k} + \left(\frac{n^{-p\left[\nu(2-d)-(\b+d)\k-2\k\right]}}{ n^{-d\k+1}}\right)^{\frac{q+1}{q+2-p}} 
  + \left( \frac{\t}{n}\right)^{\frac{q+1}{q+2}} \Bigg)\\
 &\,=\,  C_{\b,\nu,d,p,q} \Bigg(  2n^{-\b\k} + \frac{n^{-\frac{p(q+1)}{q+2-p}\left[\nu(2-d)-(\b+d+2)\k\right]}}{ n^{\frac{\b(q+2)(q+1)}{(\b(q+2)+d(q+1))(q+2-p)}}} 
  +   \left( \frac{\t}{n}\right)^{\frac{q+1}{q+2}} \Bigg)\\
   &\Leq C_{\b,\nu,d,p,q} \Bigg(  2n^{-\b\k} + \frac{n^{-\frac{p(q+1)}{q+2-p}\left[\nu(2-d)-(\b+d+2)\k\right]}}{ n^{\frac{\b(q+1)}{(\b(q+2)+d(q+1))}}} 
  +   \left( \frac{\t}{n}\right)^{\frac{q+1}{q+2}} \Bigg)\\
  &\Leq C_{\b,\nu,d,p,q}  \Bigg( 2n^{-\frac{\b(q+1)}{\b(q+2)+d(q+1)}} +n^{-\frac{\b(q+1)}{\b(q+2)+d(q+1)}} n^{-\frac{p(q+1)}{q+2}\left[\nu(2-d)-\frac{(\b+d+2)(q+1)}{\b(q+2)+d(q+1)}\right]} \!
  +  \left( \t n^{-1}\right)^{\frac{q+1}{q+2}} \Bigg)\\
  &\Leq C_{\b,\nu,\xi,d,q} \tau^{\frac{q+1}{q+2}} \cdot  n^{-\frac{\b(q+1)}{\b(q+2)+d(q+1)}+\xi}\\
\end{align*}
holds with probability $P^n$ not less than $1-3e^{-\t}$, where the constants $C_{\b,\nu,d,p,q},c_1,c_2,c_3 >0$ depend on $\b,\nu,d,p,q$ and the constant $C_{\b,\nu,\xi,d,q}>0$ depends on $\b,\nu,\xi,d,q$. Furthermore, we remark that we chose $p$ sufficiently close to zero such that $\xi \geq \frac{p(q+1)}{q+2}\left(\nu(d-2)+\frac{(\b+d+2)(q+1)}{\b(q+2)+d(q+1)}\right) \geq 0$. 
\end{proof}

\section{Appendix}

\begin{theorem}\label{main thm.}
Let $P$ be a distribution on $X \times Y$ with noise exponent $q \in (0,\infty]$ and let $L: Y\times\R\to [0,\infty]$ be the hinge loss. Furthermore let (A) be satisfied with $H_j:=H_{\g_j}(A_j)$ and assume that, for fixed $n\geq 1$, there exist constants $p\in(0,1)$ and $a_1,\ldots,a_m > 0$ 
such that for all $j\in \{1,\ldots,m\}$
\begin{align} \label{entropy assumption}
 e_i(\mathrm{id} : H_j \to L_2(\TrP{A_j})) \leq a_j\, i^{-\frac{1}{2p}} 
 \,,\qquad\qquad i\geq1\,. 
\end{align}
Finally, fix an $f_0\in H$. Then, for all fixed $\t>0$, 
$\bs\lb=(\lb_1,\ldots,\lb_m)>0$, $\bs\g=(\g_1,\ldots,\g_m)>0$ and 
\begin{align*}
  a := \max\left\{c_p m^{\frac{1}{2}} \left(\sum_{j=1}^m \lb_j^{-p} a_j^{2p}\right)^{\frac{1}{2p}}  , 2\right\}\,
\end{align*}
the VP-SVM given by \eqref{VP-SVM} satisfies
\begin{align*}
  &\sum_{j=1}^m \lb_j \|f_{\D_{j},\lb_j,\g_j}\|^2_{\hat{H}_j}+\RLP(\fcl_{\D,\bs\lb,\bs\g})-\RB \nonumber\\ 
  &\Leq  9 \!\left(\sum_{j=1}^m \lb_j \|\eins_{A_j}f_0\|^2_{\hat{H}_j}
  +\RLP(f_0) -\!\RB\!\right) 
  + C \left(a^{2p}n^{-1}\right)^{\frac{q+1}{q+2-p}} 
  + 3 \left(\frac{432 c_{\text{NE}}^{\frac{q}{q+1}} \t}{n}\right)^{\frac{q+1}{q+2}}  + \frac{30\t}{n}
\end{align*}
with probability $\P^n$ not less than $1-3e^{-\t}$, where 
$C>0$ is a constant only depending on $p$.
\end{theorem}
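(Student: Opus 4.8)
The plan is to recognize $f_{\D,\bs\lb,\bs\g}$ as an ordinary clipped SVM in a single RKHS and then to invoke a general SVM oracle inequality with a variance bound. By the discussion preceding the theorem, $H=\bigoplus_{j=1}^m\hat H_j$ with $\|f\|_H^2=\sum_{j=1}^m\lb_j\|f_j\|_{\hat H_j}^2$ is an RKHS over $X$. Since the cells partition $X$, every $f=\sum_jf_j\in H$ satisfies $f(x_i)=f_j(x_i)$ for the unique $j$ with $x_i\in A_j$, so the regularized empirical hinge risk $\|f\|_H^2+\tfrac1n\sum_{i=1}^nL(y_i,f(x_i))$ decouples over $j$ (the points outside $A_j$ add only a constant to the $j$-th subproblem), and its minimizer over $H$ is precisely $\sum_jf_{\D_j,\lb_j,\g_j}=f_{\D,\bs\lb,\bs\g}$. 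Thus $f_{\D,\bs\lb,\bs\g}$ is the SVM in $H$ with regularization parameter $\lb=1$, and, the summands having pairwise disjoint supports, clipping commutes with the sum, so $\fcl_{\D,\bs\lb,\bs\g}$ is the clipped SVM decision function in $H$. It therefore suffices to apply a general oracle inequality for clippable Lipschitz losses, e.g.\ \citet[Theorem~7.23]{StCh08}, to $H$ with $\lb=1$, and to read off the claim; note that $\lb\|f_{\D,\bs\lb,\bs\g}\|_H^2=\sum_j\lb_j\|f_{\D_j,\lb_j,\g_j}\|_{\hat H_j}^2$ is exactly the first summand on the left-hand side.

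That general inequality needs three inputs, two of which are immediate. First, the approximation error at the fixed $f_0=\sum_jf_{0,j}\in H$ is bounded by $\lb\|f_0\|_H^2+\RLP(f_0)-\RB=\sum_j\lb_j\|\eins_{A_j}f_0\|_{\hat H_j}^2+\RLP(f_0)-\RB$, which (up to the numerical constant $9$ furnished by the inequality) is the first term on the right-hand side. Second, the hinge loss is $1$-Lipschitz in its second argument and bounded by $2$ after clipping, and under the noise exponent $q$ it admits the variance bound $\E_P(L\circ\cl f-L\circ\fB)^2\Leq V\bigl(\E_P(L\circ\cl f-L\circ\fB)\bigr)^{\vt}$ with $\vt=q/(q+1)$ and $V$ of order $c_{\text{NE}}^{q/(q+1)}$, cf.\ \citet[Theorem~8.24]{StCh08}. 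Plugging $\vt=q/(q+1)$ into the exponents produced by the machinery, $\tfrac1{2-p-\vt+p\vt}=\tfrac{q+1}{q+2-p}$ and $\tfrac1{2-\vt}=\tfrac{q+1}{q+2}$, already accounts for the shape of the last three terms.

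The third and only genuinely new input is the entropy estimate $e_i(\mathrm{id}:H\to L_2(\P_X))\Leq a\,i^{-\frac1{2p}}$ for all $i\ge1$. Here $\mathrm{id}:H\to L_2(\P_X)$ is, via the isometric identifications coming from $\|f\|_H^2=\sum_j\lb_j\|f_j\|_{\hat H_j}^2$ and $\|f\|_{L_2(\P_X)}^2=\sum_j\|f_j\|_{L_2(\TrP{A_j})}^2$ (the latter by disjointness of supports, since $\P_X=\sum_j\TrP{A_j}$), the $\ell_2$-direct sum $\bigoplus_{j=1}^mS_j$ of the operators $S_j:\hat H_j\to L_2(\TrP{A_j})$, where the $j$-th copy of $\hat H_j$ carries the norm $\lb_j^{1/2}\|\cdot\|_{\hat H_j}$; by the assumed bound \eqref{entropy assumption} this gives $e_i(S_j)\le b_j\,i^{-\frac1{2p}}$ with $b_j:=\lb_j^{-1/2}a_j$. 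One covers each $S_jB$ by $2^{i_j-1}$ balls of radius $b_ji_j^{-\frac1{2p}}$ and takes the product cover, which uses $2^{\sum_j(i_j-1)}$ balls and has $\ell_2$-radius $\bigl(\sum_jb_j^2i_j^{-1/p}\bigr)^{1/2}$; allocating the budget $\sum_j(i_j-1)=i-1$ by the Hölder/Lagrange choice $i_j\propto b_j^{2p/(1+p)}$ yields, up to rounding, $e_i(\mathrm{id}:H\to L_2(\P_X))\le c_p\bigl(\sum_jb_j^{2p/(1+p)}\bigr)^{\frac{1+p}{2p}}i^{-\frac1{2p}}$. A final application of Hölder's inequality in $\R^m$ bounds $\bigl(\sum_jb_j^{2p/(1+p)}\bigr)^{\frac{1+p}{2p}}$ by $m^{1/2}\bigl(\sum_jb_j^{2p}\bigr)^{\frac1{2p}}=m^{1/2}\bigl(\sum_j\lb_j^{-p}a_j^{2p}\bigr)^{\frac1{2p}}$, and replacing the constant by $a=\max\{c_pm^{1/2}(\sum_j\lb_j^{-p}a_j^{2p})^{1/(2p)},2\}$ also covers the small-$i$ range (via $e_1(\mathrm{id})\le(\sum_jb_j^2)^{1/2}$) and provides the uniform control the general theorem requires.

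The main obstacle is precisely this entropy step: a naive equal split of the index budget over the $m$ cells produces a spurious factor $m^{1/(2p)}$ rather than $m^{1/2}$, so the Hölder-type allocation together with the $\ell_{2p/(1+p)}\hookrightarrow\ell_{2p}$ conversion is what pins down the stated constant $a$. With all three inputs in place, \citet[Theorem~7.23]{StCh08} applied to $H$ with $\lb=1$ yields, with probability $\P^n$ not less than $1-3e^{-\t}$, the asserted inequality, the terms $C(a^{2p}n^{-1})^{\frac{q+1}{q+2-p}}$, $3(432\,c_{\text{NE}}^{q/(q+1)}\t/n)^{\frac{q+1}{q+2}}$ and $30\t/n$ being the stochastic, variance and remainder contributions of that general inequality.
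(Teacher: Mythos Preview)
Your argument is correct and in fact more explicit than the paper's, which simply invokes \citet[Theorem~5]{EbStXXa}---an oracle inequality already stated for the VP-SVM setting, with the constant $a=\max\{c_p m^{1/2}(\sum_j\lb_j^{-p}a_j^{2p})^{1/(2p)},2\}$ built into its hypotheses---and then plugs in the hinge-loss parameters $\vt=q/(q+1)$, $V=6c_{\text{NE}}^{q/(q+1)}$, $B_0=2$ from \citet[Theorem~8.24]{StCh08}. In other words, the paper treats the direct-sum entropy estimate and the resulting $m^{1/2}$ factor as prior work and spends its proof only on identifying $\vt$, $V$, $B_0$.

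You instead reduce to a single-RKHS oracle inequality and carry out the entropy aggregation for $\mathrm{id}:H\to L_2(\P_X)$ by hand, via the product cover with the Lagrange/H\"older allocation $i_j\propto b_j^{2p/(1+p)}$ and the subsequent $\ell_{2p/(1+p)}\hookrightarrow\ell_{2p}$ step that produces exactly $m^{1/2}(\sum_j\lb_j^{-p}a_j^{2p})^{1/(2p)}$. This is precisely the content that \citet[Theorem~5]{EbStXXa} encapsulates, so you are effectively reproving that lemma inside your argument. The benefit of your route is that it is self-contained and makes transparent why the naive equal split would give $m^{1/(2p)}$ rather than $m^{1/2}$; the paper's route is shorter but opaque on this point. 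One small caution: the explicit numerical constants $9$, $3\cdot(72V\t/n)^{\frac{q+1}{q+2}}$, $15B_0\t/n$ are those of \citet[Theorem~5]{EbStXXa}; if you cite a different single-RKHS theorem you should check that it yields the same values (otherwise the structure is unchanged but the constants $9$, $432$, $30$ may shift).
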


\begin{proof}[Proof of Theorem \ref{main thm.}]
One can obtain the result directly by an application of \citet[Theorem~5]{EbStXXa}. To this end, we note that the hinge loss is Lipschitz continuous and can be clipped at $M=1$. 
Since  $H$ is the sum-RKHS of RKHSs with Gaussian kernels and the Gaussian kernel is bounded, w.l.o.g.\ we assume for $f_0 \in H$ that $\|f_0\|_{\infty} \leq 1$. Hence, $\|L \circ f_0\|_\infty\leq 2$ and therefore 
$B_0=2$. Furthermore \citet[Theorem~8.24]{StCh08} showed that for the hinge loss the constants $V$ and $\vt$ from \citet[Theorem~5]{EbStXXa} can be achieved by $V=6c_{\text{NE}}^{\frac{q}{q+1}}$ and $\vt=\frac{q}{q+1}$.
That means
\begin{align*}
  & \sum_{j=1}^m \lb_j \|f_{\D_{j},\lb_j,\g_j}\|^2_{\hat{H}_j}+\RLP(\fcl_{\D,\bs\lb,\bs\g})-\RB\\
  &\Leq  9 \left(\sum_{j=1}^m  \lb_j\|\eins_{A_j} f_0\|^2_{\hat{H}_j}
  +\RLP(f_0) -\RB \right) 
  + C \left( \frac{a^{2p}}{n}\right)^{\frac{1}{2-p-\vt+\vt p}} 
  + 3 \left(\frac{72 V\t}{n}\right)^{\frac{1}{2-\vt}}  + \frac{15B_0\t}{n}\\
  &\,=\, 9 \left( \sum_{j=1}^m   \lb_j \|\eins_{A_j} f_0\|^2_{\hat{H}_j}
  +\RLP(f_0) -\RB \right) 
  +C\left(\frac{a^{2p}}{n}\right)^{\frac{q+1}{q+2-p}} 
  + 3 \left(\frac{432 c_{\text{NE}}^{\frac{q}{q+1}} \t}{n}\right)^{\frac{q+1}{q+2}}  + \frac{30\t}{n}\
\end{align*}
holds with probability $\P^n$ not less than $1-3e^{-\t}$.
\end{proof}


\section{Some more details for results}

In this section we give some more technical details
the pseudo-code for local SVMs (Algorithm~\ref{algo:locSVM}),
and some more results of the experiments.
Firstly, some more details on how the experiments were performed:
\begin{description}
\item[Hyperparameter grid]
  We used \texttt{liquidSVM}'s default grid:
  the $\lambda$ are geometrically spaced between $0.01/{\tilde n}$ and $0.001/\tilde n$ where $\tilde n$ is the number of samples contained in the $k-1$ folds currently used for training.
  The $\gamma$ are geometrically spaced between $5r$ and $0.2r \tilde n^{-1/d}$ where $r$ is the radius of the cell, $d$ is the dimension of the data and $\tilde n$ is as above.
\item[Spatial partitioning scheme] The segmentation for mid-sized data sets $n\le 50000$ finds centers for the Voronoi cells using the farthest-first-traversal algorithm on the entire data set. For larger data sets a random subsample of the full data set is created and the splitting described above is applied recursively.

In \texttt{liquidSVM}, this is achieved with value \texttt{6} for the \texttt{partition} argument to \texttt{scripts/mc-svm.sh} (or the \texttt{-P 6} mode of \texttt{svm-train}).
\item[Random Chunks scheme]
The data is split into random partitions of the specified size.
In \texttt{liquidSVM}, this is achieved with value \texttt{1} for the \texttt{partition} argument to \texttt{scripts/mc-svm.sh} (or the \texttt{-P 1} mode of \texttt{svm-train}).
\end{description}

\begin{algorithm}[t]
\caption{Local SVM training and testing.}
\label{algo:locSVM}
\begin{algorithmic}[1]
{\footnotesize
\REQUIRE A training dataset $D$, split into cells $D_1,\ldots,D_m$, $m\ge1$,
a set $\Gamma\subset \R_{>0}$ of $\gamma$-candidates,
a set $\Lambda\subset \R_{>0}$ of $\lambda$-candidates,
the number of folds $k$ for cross-validation,
and a test set $D^T$, split into cells $D_1^T,\ldots,D_m^T$.
\ENSURE Test error
\FORALL{$j=1,\dots,m$}
	\STATE Split the cell $D_j$ into $k$ random parts $D_{j,1},\ldots,D_{j,k}$.
	\FORALL{$\ell=1,\ldots,k$}
		\STATE $D'_{j,\ell} := D_j\setminus D_{j,\ell}$
		\STATE cache pre-kernel matrix $(x_1,x_2)\to \|x_1-x_2\|_2$ for $x_1,x_2$ in $D'_{j,\ell}$
		\FORALL{$\gamma\in\Gamma$}
			\STATE use cached pre-kernel matrix to calculate kernel matrix with bandwidth $\gamma$
			\FORALL{$\lambda\in\Lambda$}
        		\STATE Train an SVM $f_{D'_{j,\ell},\lambda,\gamma}$ of the form \eqref{VP-SVM} (possibly using as warm-start the solution for the previous $\lambda$-candidate).
        		\STATE Calculate and save the validation risk $\mathcal{R}_{L,D_{j,\ell}}(f_{D'_{j,\ell},\lambda,\gamma})$
        	\ENDFOR
			\STATE Let $f_{D_j,\lambda,\gamma}$ be the linear combination of the $(f_{D'_{j,\ell},\lambda,\gamma})_{1\le \ell\le k}$ with weights exponential in
			$\mathcal{R}_{L,D_{j,\ell}}(f_{D'_{j,\ell},\lambda,\gamma})$.
        	\STATE Save the validation risk $\mathcal{R}_{L,D_{j}}(f_{D_{j},\lambda,\gamma})$
        \ENDFOR
    \ENDFOR
\ENDFOR
\FORALL{$j=1,\dots,m$}
	\STATE Select the $\gamma_j,\lambda_j$-combination minimizing the combined validation risk.
\ENDFOR
\FORALL{$j=1,\dots,m$}
	\STATE Calculate test error $\mathcal{R}_{L,D^T_{j}}(f_{D_{j},\lambda_j,\gamma_j})$
	on test cell $D_j^T$.
\ENDFOR
\STATE \textbf{return} global test error $\frac1{|D^T|}\sum_{j=1}^m |D^T_j|
\cdot\mathcal{R}_{L,D^T_{j}}(f_{D_{j},\lambda_j,\gamma_j})$.
}
\end{algorithmic}
\end{algorithm}

\begin{table}[hbt]
  \caption{Training times divided by the product of training set size, cell size and dimension (in seconds times $10^9$).
  This shows that the time complexity in \eqref{eq:complexity}
  is fulfilled quite nicely.
  }
  \label{time-table-O}
  \centering\footnotesize
  \begin{tabular}{lrrrr}
    \toprule
& 2000& 5000& 10000& 15000
\\
\midrule
&\multicolumn{4}{c}{training time} \\ \cmidrule{2-5}
\sc higgs & 33 & 29 & 29 & 29 \\
\sc hepmass & 22 & 19 & 19 & 20 \\
\sc gassensor & 20 & 19 & 19 & 19 \\
\sc susy & 45 & 39 & 38 & 38 \\
\sc covtype & 10 & 9 & 9 & 9 \\
\sc cod-rna & 52 & 51 & 45 & 48 \\
\sc skin & 116 & 117 & 106 & 111 \\
\bottomrule
  \end{tabular}
\end{table}

\begin{figure}[hbt]
  \centering
  \includegraphics[width=\textwidth]{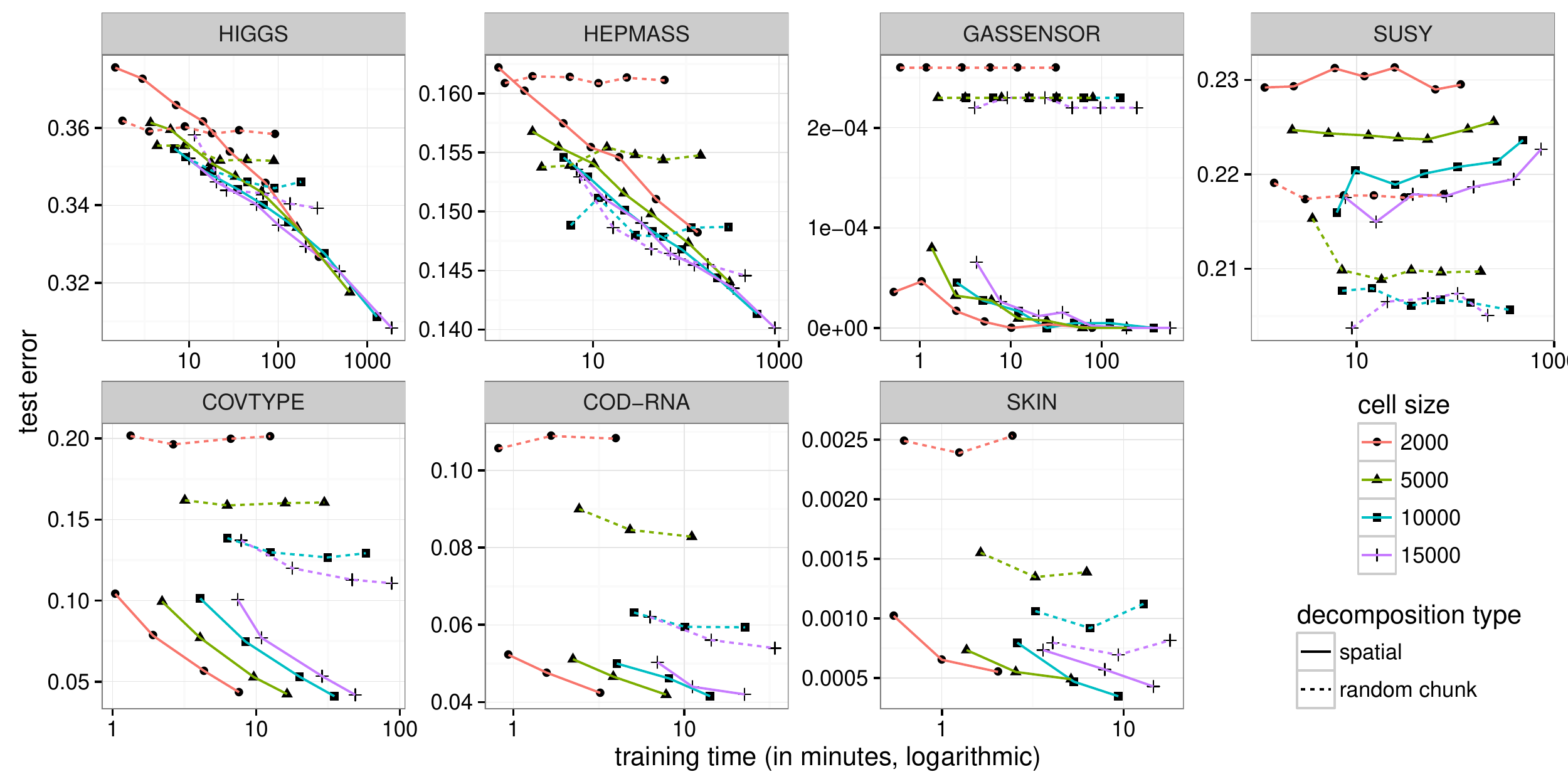}
  \caption{Training time vs. test error, this is the final trade-off.\label{figure:time-error}}
\end{figure}

\fi
\end{document}
